\pdfoutput=1
\RequirePackage[T1]{fontenc}
\documentclass[12pt]{article}

\usepackage[T1]{fontenc}
\usepackage[utf8]{inputenc}
\usepackage{microtype}

\usepackage{float}
\usepackage{multicol}
\usepackage{bm}
\usepackage{booktabs}
\usepackage{siunitx}
\usepackage{multirow}
\usepackage{amsmath,amssymb,amsfonts,amsthm,bbm}
\newtheorem{proposition}{Proposition}
\usepackage{mathtools}
\usepackage{graphicx}
\usepackage{textcomp}
\usepackage[table]{xcolor}
\usepackage{nicefrac}
\usepackage{braket}
\usepackage{adjustbox}
\usepackage{rotating}
\usepackage[percent]{overpic}
\usepackage{longtable}
\usepackage{algorithm}
\usepackage{algorithmicx}
\usepackage{algpseudocode}
\usepackage{thmtools,thm-restate}
\usepackage{enumitem}
\usepackage{authblk}
\usepackage[pagebackref=true,breaklinks=true,colorlinks,hyperfootnotes=false]{hyperref}
\hypersetup{
  colorlinks,
  citecolor=citeblue,
  linkcolor=firebrick,
  urlcolor=firebrick
  }
\usepackage[nameinlink,capitalize,noabbrev]{cleveref}

\usepackage{fullpage}
\usepackage[sort]{natbib}
\usepackage{orcidlink}
\usepackage{fancyhdr}
\newcommand{\runningtitle}{Gated QKAN-FWP: Scalable Quantum-inspired Sequence Learning}
\def\BibTeX{{\rm B\kern-.05em{\sc i\kern-.025em b}\kern-.08em
    T\kern-.1667em\lower.7ex\hbox{E}\kern-.125emX}}

\DeclareFontFamily{OT1}{pzc}{}
\DeclareFontShape{OT1}{pzc}{m}{it}{<-> s * [1.10] pzcmi7t}{}
\DeclareMathAlphabet{\pzccal}{OT1}{pzc}{m}{it}

\definecolor{lightyellow}{rgb}{1.0, 0.95, 0.7}
\definecolor{Blue}{rgb}{0, 0, 0.8}
\definecolor{blue}{rgb}{0,0,1}
\definecolor{mydarkblue}{rgb}{0,0.08,0.45}
\definecolor{mydarkblue2}{rgb}{0.133, 0.133, 0.698}
\definecolor{echodrk}{HTML}{0099cc}
\definecolor{mymauve}{rgb}{0.58,0,0.82}
\definecolor{darkgreen}{rgb}{0,0.40,0}
\definecolor{firebrick}{rgb}{0.698,0.133,0.133}
\definecolor{midnightblue}{rgb}{0.1,0.1,0.44}
\definecolor{citeblue}{RGB}{0, 113, 188}
\definecolor{oxfordblue}{rgb}{0.0,0.13,0.28}
\definecolor{prussianblue}{rgb}{0.0,0.19,0.33}
\definecolor{coolteal}{rgb}{0, 0.45, 0.45}
\definecolor{olive}{rgb}{0.1, 0.3, 0}
\definecolor{mypurple}{rgb}{0.5,0,0.5}
\definecolor{almond}{rgb}{0.94, 0.87, 0.8}

\definecolor{blue_ampEncoding}{HTML}{DAE8FC}
\definecolor{green_encoder}{HTML}{D5E8D4}
\definecolor{purple_decoder}{HTML}{E1D5E7}
\definecolor{yellow_measure}{HTML}{FFF2CC}
\definecolor{gray_block}{HTML}{F5F5F5}
\definecolor{pink_dru}{HTML}{FAD9D5}
\definecolor{orange_v}{HTML}{FAD7AC}
\definecolor{Lightblue}{HTML}{E7F4FC}

\DeclareDocumentCommand \norm { o m }{{\lVert #2 \rVert_#1}}

\newtheorem{lemma}{Lemma}[section]

\declaretheoremstyle[%
  spaceabove=10pt,%
  spacebelow=2pt,%
  headfont=\normalfont\itshape,%
  postheadspace=0em,%
  qed=%
]{prfstyle}

\title{\LARGE \rmfamily \bfseries
Gated QKAN-FWP:\\ Scalable Quantum-inspired Sequence Learning}

\author[1,2,\orcidlink{0009-0001-8342-2481}]{Kuo-Chung Peng\textsuperscript{*,}}
\author[3,\orcidlink{0000-0003-0114-4826}]{Samuel Yen-Chi Chen\textsuperscript{*,}}
\author[1,4,5,\orcidlink{0009-0005-1134-4962}]{Jiun-Cheng Jiang}
\author[6,\orcidlink{0000-0002-5437-5188}]{Chen-Yu Liu}
\author[7,\orcidlink{0000-0002-6770-0285}]{En-Jui Kuo}
\author[4,\orcidlink{0009-0001-0323-3382}]{Yun-Yuan Wang}
\author[8,\orcidlink{0000-0002-2851-4260}]{Prayag Tiwari}
\author[9,\orcidlink{0000-0003-0555-9543}]{Andrea Ceschini}
\author[10,\orcidlink{0000-0003-0807-0217}]{Chi-Sheng Chen}
\author[2,11,\orcidlink{0009-0004-7221-3854}]{Yu-Chao Hsu}
\author[1,2,\orcidlink{0009-0002-4383-0453}]{Chun-Hua Lin}
\author[2,\orcidlink{0000-0002-1993-1863}]{Tai-Yue Li}
\author[9,\orcidlink{0000-0002-4371-5925}]{Antonello Rosato}
\author[9,\orcidlink{0000-0002-9876-1494}]{Massimo Panella}
\author[12,\orcidlink{0000-0002-4958-9237}]{Simon See}
\author[13,\orcidlink{0000-0002-4402-7710}]{Saif Al-Kuwari}
\author[13,\orcidlink{0000-0002-6575-7034}]{Kuan-Cheng Chen\textsuperscript{$\dagger$,}}
\author[2,\orcidlink{0000-0001-8139-6809}]{Nan-Yow Chen\textsuperscript{$\dagger$,}}
\author[1,5,6,14,\orcidlink{0000-0001-8117-5846}]{Hsi-Sheng Goan\textsuperscript{$\dagger$,}}

\affil[1]{Department of Physics and Center for Theoretical Physics, National Taiwan University, Taipei, Taiwan}
\affil[2]{National Center for High-Performance Computing, National Institutes of Applied Research, Hsinchu, Taiwan}
\affil[3]{Wells Fargo, New York, NY, USA}
\affil[4]{NVIDIA AI Technology Center, NVIDIA Corp., Taipei, Taiwan}
\affil[5]{Center for Quantum Science and Engineering, National Taiwan University, Taipei, Taiwan}
\affil[6]{Graduate Institute of Applied Physics, National Taiwan University, Taipei, Taiwan}
\affil[7]{Department of Electrophysics, National Yang Ming Chiao Tung University, Hsinchu, Taiwan}
\affil[8]{School of Information Technology, Halmstad University, Sweden}
\affil[9]{Department of Information Engineering, Electronics and Telecommunications (DIET), University of Rome ``La Sapienza'', Rome, Italy}
\affil[10]{Beth Israel Deaconess Medical Center \& Harvard Medical School, Boston, MA, USA}
\affil[11]{Cross College Elite Program, National Cheng Kung University, Tainan, Taiwan}
\affil[12]{NVIDIA AI Technology Center, NVIDIA Corp., Singapore, Singapore}
\affil[13]{Qatar Center for Quantum Computing, College of Science and Engineering, Hamad Bin Khalifa University, Doha, Qatar}
\affil[14]{Physics Division, National Center for Theoretical Sciences, Taipei, Taiwan}
\affil[*]{These authors contributed equally to this work.}
\affil[$\dagger$]{Correspondence to: 
\href{mailto:kchen@hbku.edu.qa}{\texttt{kchen@hbku.edu.qa}},
\href{mailto:nanyow@nchc.narl.org.tw}{\texttt{nanyow@nchc.narl.org.tw}},
and \href{mailto:goan@phys.ntu.edu.tw}{\texttt{goan@phys.ntu.edu.tw}}.
}
\date{\today}

\begin{document}
\maketitle

\begingroup
\renewcommand\thefootnote{}
\footnotetext{The views expressed in this article are those of the authors and do not represent the views of Wells Fargo. This article is for informational purposes only. Nothing contained in this article should be construed as investment advice. Wells Fargo makes no express or implied warranties and expressly disclaims all legal, tax, and accounting implications related to this article.}
\endgroup

\clearpage

\begin{abstract}
Fast Weight Programmers (FWPs) encode temporal dependencies through dynamically updated parameters rather than recurrent hidden states. 
Quantum FWPs (QFWPs) extend this idea with variational quantum circuits (VQCs), but existing implementations rely on multi-qubit architectures that are difficult to scale on noisy intermediate-scale quantum (NISQ) devices and expensive to simulate classically. 
We propose gated QKAN-FWP, a fast-weight framework that integrates FWP with Quantum-inspired Kolmogorov–Arnold Network (QKAN) using single-qubit data re-uploading circuits as learnable nonlinear
activation, known as DatA Re-Uploading ActivatioN (DARUAN). 
We further introduce a scalar-gated fast-weight update rule that stabilizes parameter evolution, supported by a theoretical analysis of its adaptive memory kernel, geometric boundedness, and parallelizable gradient paths. 
We evaluate the framework across time-series benchmarks, MiniGrid reinforcement learning, and highlight real-world solar cycle forecasting as our main practical result. 
In the long-horizon setting with 528-month input window and 132-month forecast horizon, our 12.5k-parameter model achieves lower scaled Mean Square Error (MSE), peak amplitude error, and peak timing error than a suite of classical recurrent baselines with up to 13× more parameters, including Long Short-Term Memory (LSTM) networks (25.9k–89.1k parameters), WaveNet-LSTM (167k), Vanilla recurrent neural network (11.5k), and a Modified Echo State Network (132k).
To validate NISQ compatibility, we further deploy the trained fast programmer on IonQ and IBM Quantum processors, recovering forecasting accuracy within 0.1\% relative MSE of the noiseless simulator at 1024 shots. 
These results position gated QKAN-FWP as a scalable, parameter-efficient, and NISQ-compatible approach to quantum-inspired sequence modeling.

\vspace{1em}
\noindent \textbf{Keywords:} fast weight programming, quantum machine learning, Kolmogorov--Arnold networks, sequence modeling, reinforcement learning
\end{abstract}

\section{Introduction}
Modeling long-range temporal dependencies remains a central challenge in sequence learning and sequential decision making \cite{lin2025segrnn, liu2025typhoon, chen2024qeegnet}. In quantum machine learning (QML), this challenge is amplified by noisy intermediate-scale quantum (NISQ) hardware limitations \cite{preskill2018NISQ}. Consequently, deep, highly entangled quantum neural networks (QNNs) are difficult to execute reliably \cite{abbas2023quantum_review}, costly to simulate \cite{cerezo2025BP_and_simulability}, and hard to train \cite{mcclean2018barren,larocca2025barren}, especially within recurrent or long-horizon pipelines \cite{cerezo2021VQA, cerezo2022challenges,babbush2025grandchallenge}.
While hybrid variational quantum algorithms (VQAs) \cite{bharti2022nisqQA} have achieved breakthroughs in static domains like classification \cite{ bokhan2022multiclass, liu2025quantum, liu2025cls_hadmard, liu2025yomo, cong2019qConv, jing2025classH, chen2025qasadefi, chen2026qeegnet2,stein2022quclassi}, generative modeling \cite{huang2025generative, stein2021qugan, lloyd2018quantum, chen2025qrldiffusion, chen2025quantumnlg} and mathematical problem-solving \cite{kyriienko2021solvingPDE,patti2022multibasis_maxcut}, extending them to sequential frameworks poses a severe computational bottleneck. Quantum recurrent neural networks (QRNNs) require repeated circuit evaluations and backpropagation through time (BPTT) alongside expensive quantum gradient estimation \cite{wierichs2022PSR, abbas2023newBP_for_quantum}. As sequence length (window-size) grows, this training cost becomes prohibitive \cite{bausch2020recurrentQNN}.
Quantum Fast Weight Programmers (QFWPs) \cite{chen2024qfwp} mitigate this burden by replacing hidden-state dynamics with parameter dynamics. In QFWP, a classical slow programmer generates the parameters of a fast quantum model at each time step, thereby avoiding explicit quantum gradient computation inside a recurrent loop. However, existing QFWPs still rely on multi-qubit circuits, limiting practical scalability in the NISQ era. Recognizing these limitations, we shift our focus to a \textit{quantum-inspired} paradigm that inherently bypasses the hardware constraints. We propose \textbf{gated QKAN-FWP}, integrating Quantum-inspired Kolmogorov--Arnold Network (QKAN) \cite{jiang2025qkan} into the fast-weight programming framework. QKAN utilizes single-qubit data re-uploading circuits as learnable nonlinear activations known as DatA Re-Uploading ActivatioN (DARUAN) \cite{jiang2025qkan,Schuld_2021, perez2020data}, circumventing multi-qubit entanglement to provide expressive, hardware-friendly, and simulation-efficient modeling \cite{jiang2025qkan}. To further stabilize parameter evolution, we introduce a gated fast-weight update rule. By completely avoiding multi-qubit entanglement bottlenecks, our architecture bridges the gap between quantum concepts and classical execution. Therefore, we emphasize evaluating our model against classical baselines on practical tasks, specifically real-world long-horizon direct multi-step forecasting—a capability that remains largely out of reach for prior quantum models constrained by NISQ limits.

The main contributions of this work are as follows:
\begin{enumerate}[leftmargin=*]
    \item We propose gated QKAN-FWP, a quantum-inspired framework integrating QKAN modules with fast-weight programming for efficient sequence modeling.
    
    \item We introduce a scalar-gated fast-weight mechanism that adaptively balances memory retention and new updates, with theoretical support through adaptive memory kernels, geometric bounds, and a parallelizable unrolled recursion that yields shallower gradient paths than general recurrent neural networks (RNNs).
    
    \item We demonstrate strong empirical performance on real-world multi-step solar cycle forecasting, where our 12.5k-parameter model outperforms classical recurrent baselines spanning 11.5k to 167k parameters (up to 13× our model's size). We also evaluate comprehensively across time-series benchmarks, and MiniGrid reinforcement learning (RL).
    
    \item We validate NISQ compatibility by executing the trained fast programmer on two quantum processing units (QPUs), recovering forecasting performance within $10^{-3}$ relative Mean Square Error (MSE) of the noiseless simulator.
\end{enumerate}

\section{Related Work}

\paragraph{Quantum sequence modeling and reinforcement learning}
For sequential modeling, QRNNs and Quantum Long Short-Term Memory (QLSTM) variants have been introduced to adapt quantum neural architectures for temporally dependent tasks \cite{bausch2020recurrentQNN, ceschini2024QRU, wu2026QRU,chen2022qlstm,chen2022reservoir,hsu2024kernel,chen2025toward}. Parallel to these developments, early quantum reinforcement learning (QRL) formulations assumed fully quantum environments \cite{dong2008qrl}. Recent approaches instead utilize variational quantum circuits (VQCs) in classical environments with discrete or continuous observations \cite{chen2025qtrl, skolik2022quantum,chen2020variational,lockwood2020reinforcement, patel2024curriculum, dai2025qrl}. Furthermore, to overcome the limitations of partially observable environments, where agents must inherently track historical states, recent works have integrated QRNNs into RL policies \cite{chen2023QDRL, chen2024efficient}.

\paragraph{Fast-weight programming and quantum extensions}
Fast Weight Programmers (FWPs) \cite{schmidhuber1992learning,schmidhuber1993reducing} replace recurrent hidden-state evolution with dynamical evolution in parameter space. A slow network updates the parameters of a fast network, enabling memory-like behavior without explicit recurrence. Subsequent classical work has combined FWPs with RNNs \cite{schlag2017gated} and established analogies to linear Transformers \cite{schlag2021transformerfwp, irie2021transformerfwp}. QFWPs extend this paradigm by utilizing a parameterized quantum circuit as the fast programmer \cite{chen2024qfwp}. In QFWP, a classical slow network generates quantum circuit parameters on the fly, eliminating explicit quantum gradient computation inside the temporal loop. To further reduce the parameter size, QT-QFWP \cite{liu2025qtfwp} uses a generative QNN to synthesize the slow programmer's weights, leveraging quantum expressivity to address the scalability bottlenecks of classical slow networks.

\paragraph{KAN and QKAN architectures}
Kolmogorov--Arnold Networks (KANs) replace fixed activation functions in multilayer perceptrons (MLPs) with learnable univariate functions, yielding interpretable and parameter-efficient nonlinear modeling \cite{liu2025kan,kundu2024kanqas,liu2025kolmogorov,lee2026kano, somvanshi2025surveyKAN,noorizadegan2026practitionersguidekolmogorovarnoldnetworks,yang2025kolmogorovarnold}. This efficiency has motivated adaptation for temporal sequence modeling tasks \cite{huang2025timekan,jarraya2025soh,vaca2024kolmogorov,xu2024kolmogorov,livieris2024c,yamak2025kolmogorov}. QKAN extends the KAN architecture by implementing the edge functions with DARUAN \cite{jiang2025qkan}. The resulting quantum-inspired activations offer rich spectral expressivity while remaining lightweight and easily simulable. Prior work \cite{hsu2025qkan} embeds QKAN inside the gates of a Long Short-Term Memory (LSTM) cell to form QKAN-LSTM. Because its computation depends on the recurrent hidden state $h_{t-1}$, execution across the time dimension remains strictly sequential, and BPTT must traverse a chain of $T$ hidden-state Jacobians. In contrast, we deploy QKAN within a fast-weight programmer. Since the fast-parameter updates $\Delta W_k$ depend solely on the input $x_k$ rather than previous parameters $W_{k-1}$, we bypass the recurrent bottleneck, yielding shallower gradient paths (\cref{subsec:gated_theory}). This positions QKAN as a building block for fast-weight programming, distinct from its nonlinear recurrent-gate role in \cite{hsu2025qkan}.

\section{Preliminaries}
\subsection{Quantum-inspired Kolmogorov--Arnold Networks and Hybrid QKAN architecture}
\label{subsec:qkan}
QKAN extends the KAN paradigm by replacing classical spline-based edge functions with quantum-inspired univariate functions realized by DARUAN \cite{jiang2025qkan,liu2025kan}. For an input \(x\), each activation is defined as 
\begin{equation}
\phi_{\theta}(x)=\langle 0|U^{\dagger}(x;\theta) \hat{O} U(x;\theta)|0\rangle,
\end{equation}
where \(U(x;\theta)\) is a parameterized single-qubit data re-uploading unitary and \(\hat{O}\) is a measurement observable. Repeating data re-uploading induces a rich Fourier spectrum, enabling QKAN to represent highly nonlinear mappings with relatively few trainable parameters \cite{jiang2025qkan}. QKAN scales efficiently on CPUs, GPUs and HPC clusters, a property empirically validated by its use in large language models (LLMs) \cite{jiang2025qkan}. Beyond classical simulation efficiency, the strictly single-qubit paradigm is compatible with current NISQ hardware, where state-of-the-art platforms achieve single-qubit error rates of $10^{-5}$ -- $10^{-7}$ \cite{wu2025simultaneoushighfidelitysinglequbitgates,rower2024suppressing,smith2025single}. In~\cref{subsec:hardware}, we confirm this compatibility by deploying our trained model on IonQ and IBM QPUs.

We adopt the Hybrid QKAN (HQKAN) instantiation of the Jiang--Huang--Chen--Goan network (JHCG Net) first introduced in \cite{jiang2025qkan}. HQKAN has an encoder--processor--decoder structure: a classical encoder maps the input into a latent representation, a QKAN block performs nonlinear transformation in the latent space, and a decoder maps the transformed features to the output as illustrated in \cref{fig:hqkan}. Within our framework, HQKAN acts as a drop-in programmer network. When used as the slow programmer, it generates fast-parameter updates from the current input. When used as the fast programmer, its DARUAN parameters are dynamically updated by the slow programmer.

\begin{figure}[!t]
\centering
\includegraphics[width=\textwidth,trim={4cm 0.5cm 4cm 0cm}, clip]{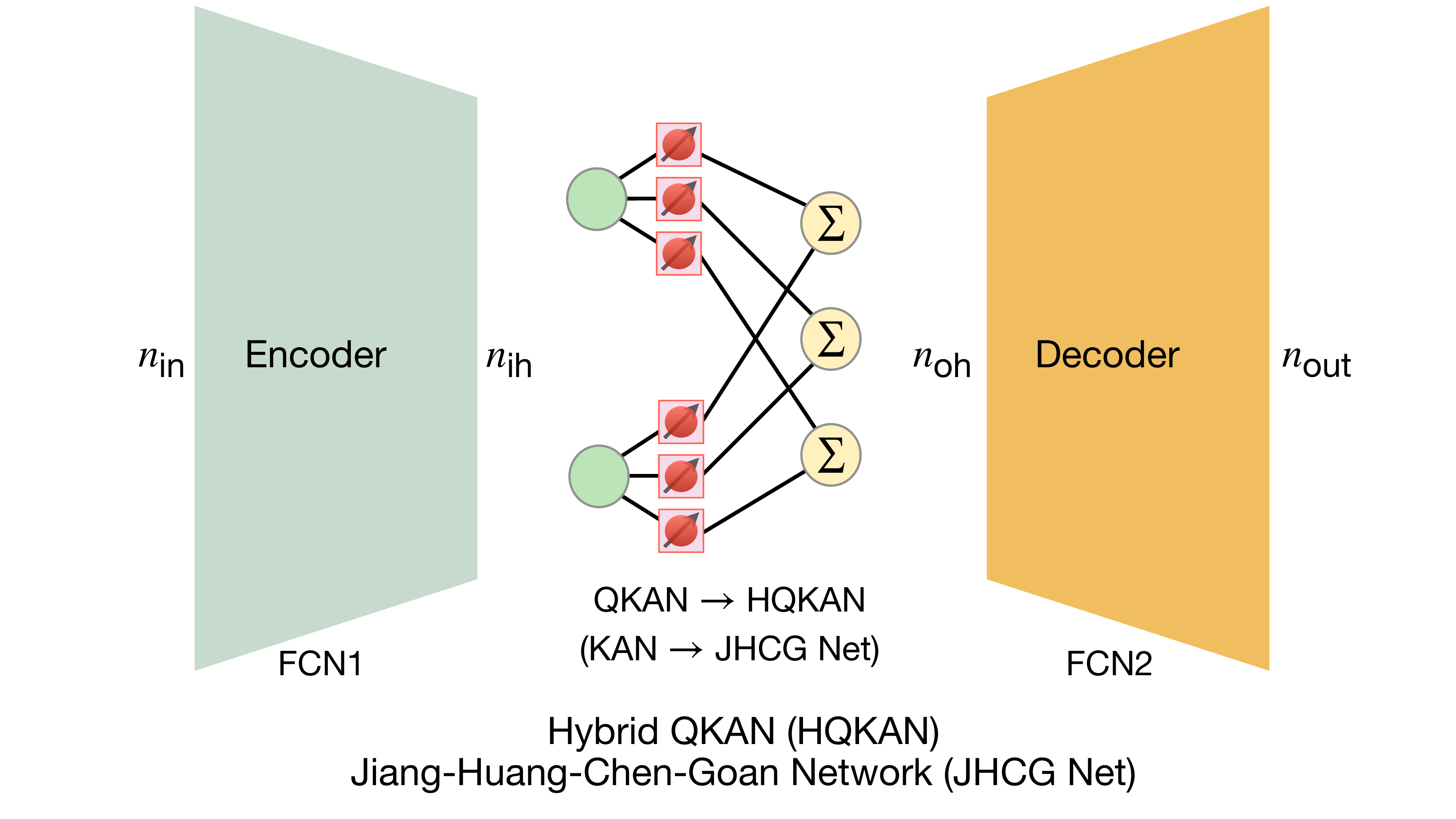}
\caption{\textbf{HQKAN programmer architecture adapted from \cite{jiang2025qkan}}. The model consists of a classical encoder, a latent QKAN processor, and a decoder. In this paper, HQKAN is used as a compact nonlinear programmer network inside the fast-weight framework.}
\label{fig:hqkan}
\end{figure}

\subsection{Fast-weight programming}
\label{subsec:FWP}
\paragraph{FWP}
FWPs model sequential data through dynamical evolution in parameter space rather than hidden-state recurrence. Let \(x_t\) be the input at time step \(t\), \(S(\cdot)\) the slow programmer, and \(F(\cdot;\,W_t)\) the fast programmer with time-dependent parameters \(W_t\). The fast network produces $y_t = F(x_t; W_t),$
while the slow programmer generates an update
$\Delta W_t = S(x_t).$
The fast parameters then evolve according to
\begin{equation}
W_{t+1} = W_t + \Delta W_t.
\label{eq:ungated}
\end{equation}
Temporal dependencies are therefore encoded in the trajectory of the fast parameters \(\{W_t\}\).

\paragraph{QFWP}
In QFWP \cite{chen2024qfwp}, the fast programmer is a VQC. A classical encoder maps \(x_t\) to two vectors \(L_t \in \mathbb{R}^{l}\) and \(Q_t \in \mathbb{R}^{n}\), corresponding to the number of circuit layers \(l\) and qubits \(n\), respectively. The update is formed as an outer product
\begin{equation*}
\Delta \Theta_t = L_t \otimes Q_t, \qquad (\Delta \Theta_t)_{ij}=L_{t,i}Q_{t,j},
\end{equation*}
which updates the quantum parameters \(\Theta_t \in \mathbb{R}^{l\times n}\): $\Theta_{t+1} = \Theta_t + \Delta \Theta_t.$
The model output is the expectation value of the fast VQC, 
\begin{equation*}
y_t = \langle 0|U^{\dagger}(\Theta_t,x_t)\,\hat{O}\,U(\Theta_t,x_t)|0\rangle.
\end{equation*}

\section{Methods}
\subsection{Gated fast-weight update}
A central contribution of this work is a gated update rule that stabilizes the evolution of the fast parameters. At each time step, the slow programmer outputs the update components together with a scalar gate \(g_t \in [0,1]\) through a sigmoid nonlinearity. The gate interpolates between the previously stored fast parameters and the newly generated update. This mechanism is mathematically analogous to the ``write-strength'' utilized in linear transformers \cite{schlag2021transformerfwp,yang2023gated}, where a data-dependent weight adaptively blends previous attention values with new updates. While in \cite{schlag2017gated}, a gated fast-weight architecture was introduced for RNNs using an element-wise matrix gate, our framework introduces a scalar gating mechanism. This scalar approach ensures uniform parameter scaling, making it parameter-efficient and naturally scalable. For the fast parameters $W_t$, our gated update is formulated as:
\begin{equation}
\label{eq:gated}
W_{t+1} = g_t W_t + (1-g_t)\Delta W_t,
\qquad g_t \in [0,1].
\end{equation}
Intuitively, when \(g_t \to 1\), the model retains its previously stored fast parameters, whereas \(g_t \to 0\) forces the model to rely entirely on the newly generated update. We analyze these dynamics theoretically in~\cref{subsec:gated_theory}. 

\begin{figure}[!t]
\centering
\includegraphics[width=\textwidth, trim={0cm 0cm 0cm 0cm}, clip]{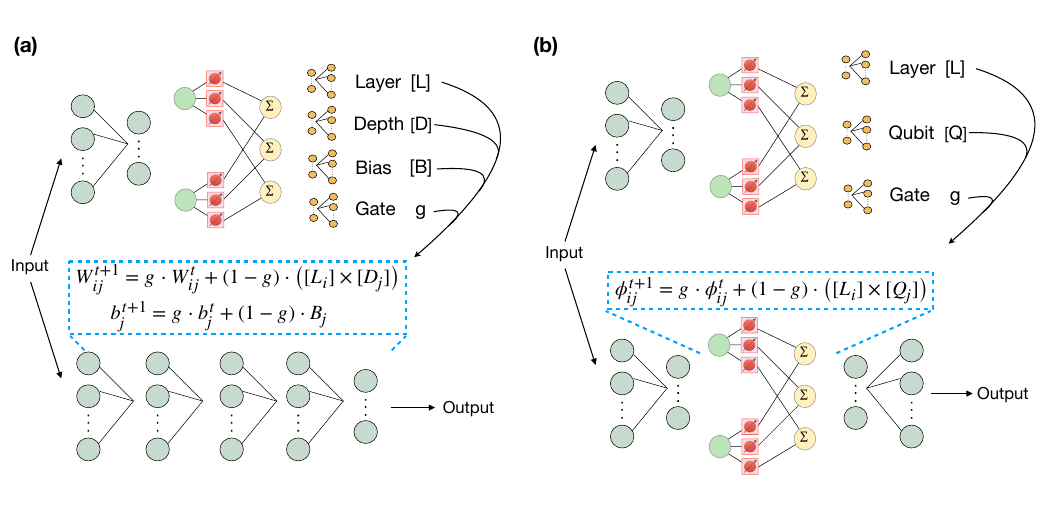}

\caption{\textbf{Architectures of the proposed gated fast-weight programmers.} \textbf{(a) GQKAN-FWP:} An HQKAN slow programmer dynamically generates the parameters of a classical linear fast programmer following a gated update rule. \textbf{(b) GQKAN-QKANFWP:} Both programmers utilize HQKAN, with the slow programmer generating the DARUAN parameters for the fast module under the same gated mechanism.}
\label{fig:model_architecture}

\end{figure}

\subsection{Model variants}

To systematically evaluate our framework, we investigate the ablation variants summarized in \cref{tab:model_variants}. For variants utilizing a classical fast programmer, the slow programmer produces update vectors $L_t \in \mathbb{R}^{l}$, $D_t \in \mathbb{R}^{n}$, and $B_t \in \mathbb{R}^{n}$. In the ungated setting (e.g., FWP), the fast-weight $W_t$ and bias $b_t$ are computed as: 
\begin{equation*}
W_{t+1} = W_t + L_t \otimes D_t \quad \text{and} \quad b_{t+1} = b_t + B_t,
\end{equation*}
yielding the output:
\begin{equation*}
y_t = x_t W_t + b_t.
\end{equation*}
Conversely, the gated variants (e.g., G-FWP, GQKAN-FWP) update these parameters according to \cref{eq:gated}. For models employing HQKAN as the fast programmer (e.g., G-QKANFWP, GQKAN-QKANFWP), let $\phi_t$ denote the fast-parameters. At each time step, the slow programmer generates the parameter update $\Delta \phi_t$ alongside the gate $g_t$. The fast parameters then evolve via the gated mechanism:
\begin{equation*}
\phi_{t+1} = g_t \phi_t + (1-g_t)\Delta \phi_t,
\end{equation*}
and the prediction is produced by the fast HQKAN programmer:
\begin{equation*}
y_t = f_{\mathrm{HQKAN}}(x_t;\phi_t).
\end{equation*}

Structural illustrations of GQKAN-FWP and GQKAN-QKANFWP are presented in \cref{fig:model_architecture}(a) and (b), respectively.

\begin{table}[!t]
\centering
\caption{Summary of model variants and their architectural components.}
\label{tab:model_variants}

\resizebox{\textwidth}{!}{
\begin{tabular}{lccc}
\toprule
Model & Gated & Slow Programmer & Fast Programmer \\
\midrule
\multicolumn{4}{l}{\textit{Baselines}} \\
FWP & No & Classical & Classical \\
QFWP & No & Classical & VQC \\
\midrule
\multicolumn{4}{l}{\textit{Proposed Models}} \\
G-FWP & Yes & Classical & Classical \\
G-QFWP & Yes & Classical & VQC \\
GQKAN-QFWP & Yes & HQKAN & VQC \\
GQKAN-FWP (\cref{fig:model_architecture}(a)) & Yes & HQKAN & Classical \\
G-QKANFWP & Yes & Classical & HQKAN \\
GQKAN-QKANFWP (\cref{fig:model_architecture}(b)) & Yes & HQKAN & HQKAN \\
\bottomrule
\end{tabular}
}

\end{table}

\section{Theoretical Analysis}
\label{subsec:gated_theory}

We provide a theoretical interpretation of the gated fast-weight update which is the mechanism introduced in \cref{eq:gated}. This update is motivated as a way to interpolate between the previously stored fast parameters and the newly generated update. The same analysis below also applies to the gated variants, after replacing \(W_t\) by the corresponding fast parameters.  
For comparison, the ungated fast-weight recursion is given by \cref{eq:ungated}, which accumulates all past updates additively.  
\paragraph{Unrolled form and adaptive memory kernel}
By recursively expanding \cref{eq:gated}, we obtain
\begin{equation}
W_{t+1}
=
\left(\prod_{s=1}^{t} g_s\right) W_1
+
\sum_{k=1}^{t}
(1-g_k)\left(\prod_{s=k+1}^{t} g_s\right)\Delta W_k .
\label{eq:gated_unroll}
\end{equation}
Therefore, the current fast parameters are a weighted aggregation of all past proposed fast states \(\{\Delta W_k\}_{k=1}^t\), together with a decayed contribution from the initialization \(W_1\). Define
\begin{equation}
\beta_{0,t} := \prod_{s=1}^{t} g_s,
\quad
\beta_{k,t} := (1-g_k)\prod_{s=k+1}^{t} g_s,
\quad k=1,\dots,t.
\label{eq:beta_def}
\end{equation}
Since \(g_t \in [0,1]\), we have \(\beta_{k,t}\ge 0\) for all \(k\),
and one may verify by induction that
\begin{equation}
\beta_{0,t} + \sum_{k=1}^{t} \beta_{k,t} = 1.
\label{eq:beta_sum}
\end{equation}
Hence \cref{eq:gated_unroll} can be written as
\begin{equation}
W_{t+1} = \beta_{0,t} W_1 + \sum_{k=1}^{t} \beta_{k,t}\Delta W_k,
\label{eq:gated_convex}
\end{equation}
which shows that the gated dynamics implement an input-dependent temporal kernel in parameter space. This interpretation highlights a key distinction from the ungated update in \cref{eq:ungated}, where every past update enters with a coefficient of \(1\) lacking a forgetting mechanism. In contrast, under \cref{eq:gated}, the contribution of \(\Delta W_k\) at time \(t+1\) is weighted by
\begin{equation}
\beta_{k,t} = (1-g_k)\prod_{s=k+1}^{t} g_s,
\label{eq:memory_coeff}
\end{equation}
which decays according to the subsequent gates. Thus, the gated recursion supports both long-memory and short-memory behavior: when the subsequent gates remain close to \(1\), older proposals are retained for many steps; when the gates are small, older proposals are rapidly forgotten. In the special case \(g_t \equiv g\), \cref{eq:memory_coeff} reduces to
\begin{equation}
\beta_{k,t} = (1-g)g^{\,t-k},
\label{eq:exp_kernel}
\end{equation}
which is an exponential memory kernel.

\paragraph{Geometric boundedness}
A second useful consequence of \cref{eq:gated_convex} is that \(W_{t+1}\) lies in the convex hull of the set
\begin{equation}
\mathcal{S}_t := \{W_1,\Delta W_1,\dots,\Delta W_t\}.
\label{eq:St_def}
\end{equation}
Therefore, for any norm \(\|\cdot\|\),
\begin{align}
\|W_{t+1}\|
&\le
\beta_{0,t}\|W_1\| + \sum_{k=1}^{t}\beta_{k,t}\|\Delta W_k\| \nonumber \\
&\le
\max\bigl\{\|W_1\|,\|\Delta W_1\|,\dots,\|\Delta W_t\|\bigr\}.
\label{eq:gated_bound}
\end{align}
This provides a simple geometric boundedness property that the gated update cannot move the fast parameters outside the convex hull generated by the initialization and the historical proposals.
By contrast, the ungated recursion in \cref{eq:ungated} admits only the crude estimate
\begin{equation}
\|W_{t+1}\|
\le
\|W_1\| + \sum_{k=1}^{t}\|\Delta W_k\|,
\label{eq:ungated_bound}
\end{equation}
which can grow linearly with the sequence length (window-size) in the worst case. Hence, whereas the ungated dynamics perform unconstrained additive accumulation, the gated dynamics replace this behavior by adaptive convex aggregation, yielding a built-in forgetting mechanism together with a norm bound controlled by the historical proposals.

\paragraph{Parallelizable parameter evolution and shallow gradient path}
A further consequence of the unrolled form~\cref{eq:gated_unroll} is 
computational. Since the slow programmer produces $\Delta W_k$ and gate 
$g_k$ directly from $x_k$ alone, independent of $W_{k-1}$, the sets 
$\{(\Delta W_k, g_k)\}_{k=1}^{T}$ for a sequence of length $T$ can be 
computed in a single parallel pass.

Observe that~\cref{eq:gated} is affine in $W_t$ with a \emph{scalar} 
multiplier. Writing $a_t := g_t \in [0,1]$ and 
$b_t := (1-g_t)\Delta W_t$, the recursion becomes
\begin{equation}
\label{eq:affine_recursion}
W_{t+1} = a_t W_t + b_t.
\end{equation}
The pairs $(g_k, (1-g_k)\Delta W_k)$ compose under the associative rule
\begin{equation}
\label{eq:scan_composition}
(a', b') \circ (a, b) = (a'a,\; a'b + b'),
\end{equation}
so the trajectory $\{W_t\}_{t=1}^{T}$ can be resolved by a parallel 
prefix scan \cite{blelloch1990prefix, martin2018parallelizing} with 
\begin{equation}
\label{eq:scan_complexity}
O\!\left(\frac{T}{p} + \log p\right)
\end{equation}
scan time on $p$ processors \cite{blelloch1990prefix}, reducing to 
$O(\log T)$ depth when $p = \Theta(T)$, in contrast to the $\Omega(T)$ 
sequential depth \cite{martin2018parallelizing} of general nonlinear 
recurrent hidden-state evolution. Moreover, each $\Delta W_k$ factors 
through an independent forward pass of the slow programmer, so BPTT 
composes through products of scalar gates rather than a chain of $T$ 
dense hidden-state Jacobians as in QKAN-LSTM \cite{hsu2025qkan}.
 
\paragraph{Implication}
The above analyses suggest that the gate plays three complementary roles: it induces an adaptive memory kernel, guarantees geometric boundedness of the fast parameters, and preserves the parallel, hidden-state-free structure of the FWP recursion. Together, these properties help explain the empirically improved stability of the gated variants relative to their ungated counterparts.

\section{Experimental Results}
We evaluate the proposed framework on single-step time-series prediction, multi-step real-world forecasting and RL tasks. To ensure robust and unbiased evaluation, all models across every experiment are independently trained and tested over five random seeds. Furthermore, to provide a fair comparison of representational capacity, all quantum baselines are executed on classical simulators utilizing exact gradients computed via BPTT, without simulated hardware noise or finite measurement shots. All quantum-circuit simulation experiments are implemented using PennyLane \cite{bergholm2018pennylane}, PyTorch \cite{paszke2019pytorch}, and an open-source QKAN implementation adapted from \cite{jiang2025qkan_github}\footnote{Available at \href{https://github.com/Jim137/qkan}{https://github.com/Jim137/qkan}}. To accelerate the QKAN framework, we adopt the PyTorch-based efficient quantum-circuit solver, \texttt{FlashQKAN}, introduced in \cite{jiang2025qkan_github}. By representing each QKAN layer as a tensor network and leveraging \texttt{cuQuantum} \cite{10313722} to optimize the tensor-contraction path, while \texttt{cuTile} \cite{cutile} is used for fused operator execution and block tiling to improve GPU throughput. For the quantum hardware experiments in~\cref{subsec:hardware}, we execute the trained fast programmer on IonQ's \texttt{Forte-1} trapped-ion system \cite{chen2024ionq} using NVIDIA CUDA-Q \cite{kim2023cuda}—a unified programming platform enabling seamless access to QPUs across modalities—with Amazon Braket \cite{braket} as the access provider, and on the IBM Quantum superconducting Heron r3 processor \texttt{ibm\_aachen} \cite{ibmquantum2026} via Qiskit \cite{javadi2024qiskit}.

\subsection{Time-series prediction}
\label{subsec:time_series}
We evaluate the models on four benchmark datasets used in \cite{chen2024qfwp}---Damped Simple Harmonic Motion (SHM), the Bessel function, and Nonlinear Auto-Regressive Moving Average (NARMA5 and NARMA10)---and two additional datasets related to quantum dynamics: Delayed Quantum Control (DQC) and open quantum system Jaynes-Cummings (JC) dynamics. Across all tasks, we frame next-step prediction as a sequential modeling problem. Given an input sequence of sliding window-size (sequence-length) $N$ previous observations $[x_{t-N}, x_{t-N+1}, \dots, x_{t-1}]$, the model processes each element $x_\tau$ one at a time for $\tau \in [t-N, t-1]$. After processing the full sequence, the model outputs a prediction $y_t$ at the final time step, which is evaluated against the ground truth $x_t$ using MSE. Each dataset is normalized to the range $[-1, 1]$ and chronologically split into 80\% training and 20\% test data. Each model is trained for 50 epochs with a batch size of 4 and a learning rate of $1\times10^{-3}$. \cref{tab:num_parameter} summarizes each model's trainable parameter counts for \cref{subsec:time_series} and \cref{subsec:rl}.

We evaluate the models in two stages. Stage I fixes the input window-size to $N=16$ as an ablation study to rank all variants under a common setting. Stage II evaluates the top-performing models across variable input window-sizes $N\in\{8,16,32,64\}$ to test the models' capacity to retain memory and capture both short- and long-range temporal dependencies.

\subsubsection{Datasets}
\noindent\textbf{Damped SHM.}
Damped SHM is a standard benchmark for nonlinear function approximation. We model the angular velocity $\dot{\theta}$ of a damped pendulum governed by 
\begin{equation*}
\frac{d^2 \theta}{dt^2} + \frac{b}{m}\frac{d\theta}{dt} + \frac{g}{L}\sin\theta = 0,
\end{equation*}
where $g=9.81$, $b=0.15$, $L=1$, and $m=1$, with initial conditions $\theta(0)=0$ and $\dot{\theta}(0)=3$.

\noindent\textbf{Bessel function.}
Bessel functions arise in many physical applications, such as wave propagation and heat conduction in cylindrical geometries. The target is the second-order Bessel function of the first kind, $J_2(x)$, which satisfies 
\begin{equation*}
x^2 \frac{d^2 y}{dx^2} + x\frac{dy}{dx} + (x^2-\alpha^2)y = 0
\end{equation*}
with the series representation:
\begin{equation*}
J_{\alpha}(x)=\sum_{m=0}^{\infty}\frac{(-1)^m}{m!\,\Gamma(m+\alpha+1)}\left(\frac{x}{2}\right)^{2m+\alpha}.
\end{equation*}

\noindent\textbf{NARMA.}
We use the standard NARMA5 ($n_0=5$) and NARMA10 ($n_0=10$) benchmarks following \cite{chen2024qfwp} with $M=300$ timesteps 
generated from the recurrence: 
\begin{equation*}
y_{t+1} = \alpha y_t + \beta y_t \sum_{j=0}^{n_0-1} y_{t-j} + \gamma u_{t-n_0+1}u_t + \delta,
\end{equation*}
where $(\alpha,\beta,\gamma,\delta)=(0.3,0.05,1.5,0.1)$. The input sequence is:  
\begin{equation*} 
u_t = 0.1 \left[ \sin\left(\frac{2\pi \bar{\alpha} t}{T}\right) \sin\left(\frac{2\pi \bar{\beta} t}{T}\right) \sin\left(\frac{2\pi \bar{\gamma} t}{T}\right) + 1 \right],
\end{equation*}
where $(\bar{\alpha},\bar{\beta},\bar{\gamma},T)=(2.11,3.73,4.11,100)$. 

\noindent\textbf{DQC.}
To evaluate the model's capacity for long-term temporal dependencies, we consider a non-Markovian \cite{fang2018non-markovian} system of a two-level atom (qubit) coupled to a semi-infinite waveguide terminated by a mirror, inducing delayed quantum feedback via a bound state in the continuum \cite{calajo2019dqc,tufarelli2013dynamics}. Following \cite{chen2022qlstm}, we predict the output field intensity $x(t)$, modeled as a sequence of localized pulses with decaying amplitude: 
\begin{equation*}
x(t)=\sum_{n=0}^{10}\exp\left[-10(t-2n)^2\right]\exp\left(-\frac{t}{16}\right)
\end{equation*}
for $t \in [-2, 20]$. This formulation captures the structured, non-stationary nature of the delayed feedback response.

\noindent\textbf{Open Quantum System JC Dynamics.}
To incorporate realistic environmental noise, we simulate the dynamics of an open quantum system based on the JC model using CUDA-Q Dynamics, the open quantum system simulation backend of CUDA-Q \cite{kim2023cuda}. The system Hamiltonian is: \begin{equation*}
\mathcal{H} = \omega_c a^\dagger a + \omega_q \sigma_+ \sigma_- + g(\sigma_- a^\dagger + \sigma_+ a),
\end{equation*}
where $a^\dagger, a$ are bosonic creation and annihilation operators, $\sigma_+, \sigma_-$ are qubit raising and lowering operators, $\omega_c = \omega_q = 2\pi$ denotes the resonant cavity and qubit frequencies, and $g = \pi$ is the coupling strength. To capture non-unitary evolution, we apply a collapse operator $C = \sqrt{\gamma} a$ (decay rate $\gamma=0.05$) representing photon loss. While the coupling ratio $g/\omega = 0.5$ exceeds typical experimental values, it is chosen for simplicity and to yield a demanding benchmark curve that combines rapid oscillations with dissipative decay. The system is initialized with the qubit in its ground state and a single cavity photon ($\rho_0 = |g, 1\rangle\langle g, 1|$), the target signal is the qubit excitation expectation probability $\langle \sigma_+ \sigma_- \rangle(t)$, evaluated over 3,000 discrete time steps from $t=0$ to $t=50$.

\begin{table}[!t]
\centering
\caption{Trainable parameter counts for each model in the time-series prediction and reinforcement-learning experiments.}
\label{tab:num_parameter}

\resizebox{\textwidth}{!}{
\begin{tabular}{l S S}
\toprule
Model & {Time-series prediction (\cref{subsec:time_series})} & {Reinforcement learning(\cref{subsec:rl})} \\
\midrule
FWP & 128 & 2656\\
QFWP & 111 & 2530\\
G-FWP & 137 & 2665\\
G-QFWP & 120 & 2539\\
GQKAN-QFWP & 100 & 1801\\
GQKAN-FWP & 113 & 2605\\
G-QKANFWP & 116 & 1786\\
GQKAN-QKANFWP & 159 & 1114 \\
\bottomrule
\end{tabular}
}

\end{table}

\subsubsection{Stage I: fixed window-size evaluation}
\cref{tab:time_first_stage} reports the final test loss at input window-size $N=16$. HQKAN-based gated models provide better overall balance across datasets. In particular, GQKAN-QKANFWP attains the best result on three of the six datasets, while GQKAN-FWP and G-QKANFWP each rank among the top two in multiple tasks. In contrast, QFWP achieves the best result on NARMA10. We advance GQKAN-QKANFWP, G-QKANFWP, GQKAN-FWP and QFWP to Stage II.

\subsubsection{Stage II: variable window-size evaluation}
\cref{tab:bessel,tab:narma,tab:quantum} report the results, from which four trends emerge. First, GQKAN-QKANFWP exhibits the greatest robustness to varying $N$, attaining the lowest prediction error in 10 of the 24 settings. Second, G-QKANFWP dominates NARMA5 and NARMA10 at longer windows, indicating that an HQKAN fast programmer is well suited to long-range nonlinear dependencies. Third, GQKAN-FWP leads on the quantum-dynamics datasets (DQC and JC), where an HQKAN-based slow programmer captures delayed feedback and dissipative noise effectively. Fourth, while the QFWP attains an MSE of $2\times10^{-6}$ on NARMA10 at $N{=}16$, it degrades to $1.3\times10^{-4}$ at $N\in\{32,64\}$, a roughly $60\times$ collapse. Similar degradation trends are also shown in the quantum dynamics datasets~\cref{tab:quantum}.
Taken together, the results indicate that the gated QKAN-FWP variants yield the most favorable balance between accuracy and stability across window sizes. Among them, GQKAN-QKANFWP exhibits the most consistent behavior across all three dataset families: it is best on every window-size for the smooth-dynamics benchmarks, best or second-best on three of four window sizes for the quantum-dynamics benchmarks, and on the NARMA benchmarks remains close to the leading variants. This stability is consistent with the theoretical properties of the gated memory mechanism (\cref{subsec:gated_theory}) and the spectral expressivity of the HQKAN architecture (\cref{subsec:qkan}), and is precisely the property required for real-world forecasting, which motivates our selection of GQKAN-QKANFWP for the real-world forecasting study in \cref{subsec:multistep}.

\begin{table*}[!t]
\centering
\caption{Final test loss (MSE, mean \(\pm\) std over 5 seeds) with window-size \(N=16\). Best/second-best results are shown in \textbf{bold}/\underline{underlined}.}
\label{tab:time_first_stage}

\resizebox{\textwidth}{!}{
\begin{tabular}{lcccccc}
\toprule
Model & Bessel & Damped SHM & Narma5 & Narma10 & Delayed Quantum Control & Jaynes-Cummings \\
\midrule
FWP & 0.004616$\pm$0.001822 & 0.000140$\pm$0.000110 & 0.000208$\pm$0.000251 & 0.000138$\pm$0.000161 & 0.000140$\pm$0.000110 & 0.000699$\pm$0.001117 \\
QFWP & 0.003918$\pm$0.001110 & 0.003460$\pm$0.004777 & 0.000035$\pm$0.000017 & \textbf{0.000002$\pm$0.000002} & 0.003119$\pm$0.003364 & 0.010183$\pm$0.019995 \\
G-FWP & 0.000985$\pm$0.001626 & 0.000534$\pm$0.001003 & 0.000014$\pm$0.000009 & 0.000013$\pm$0.000005 & 0.000127$\pm$0.000063 & 0.000307$\pm$0.000258 \\
G-QFWP & 0.001384$\pm$0.001672 & 0.001682$\pm$0.001368 & \underline{0.000013$\pm$0.000010} & 0.000031$\pm$0.000036 & 0.000143$\pm$0.000038 & 0.000545$\pm$0.000270 \\
GQKAN-QFWP & \underline{0.000108$\pm$0.000182} & 0.000167$\pm$0.000052 & 0.000031$\pm$0.000012 & 0.000089$\pm$0.000049 & 0.000074$\pm$0.000069 & 0.000188$\pm$0.000264 \\
GQKAN-FWP & 0.000368$\pm$0.000688 & \underline{0.000059$\pm$0.000035} & 0.000036$\pm$0.000021 & 0.000071$\pm$0.000028 & \underline{0.000053$\pm$0.000077} & \textbf{0.000070$\pm$0.000074} \\
G-QKANFWP & 0.000661$\pm$0.001311 & 0.002287$\pm$0.001290 & \textbf{0.000012$\pm$0.000006} & \underline{0.000011$\pm$0.000008} & 0.000272$\pm$0.000212 & 0.000664$\pm$0.000041 \\
GQKAN-QKANFWP & \textbf{0.000011$\pm$0.000012} & \textbf{0.000036$\pm$0.000025} & 0.000028$\pm$0.000018 & 0.000052$\pm$0.000031 & \textbf{0.000041$\pm$0.000049} & \underline{0.000159$\pm$0.000227} \\
\bottomrule
\end{tabular}
}

\end{table*}

\begin{table}[!t]
\centering
\caption{Final test loss (MSE, mean $\pm$ std over 5 seeds) on the \textbf{Bessel function} and \textbf{Damped SHM} datasets. Best/second-best results are shown in \textbf{bold}/\underline{underlined}.}
\label{tab:bessel}
\resizebox{\textwidth}{!}{
\begin{tabular}{lcccc}
\toprule
Model & Window-size=8 & Window-size=16 & Window-size=32 & Window-size=64 \\
\midrule
\multicolumn{5}{l}{\textbf{\textit{Bessel function}}} \\
\addlinespace[2pt]
QFWP & 0.000126$\pm$0.000100 & 0.003918$\pm$0.001110 & 0.005946$\pm$0.002581 & 0.005269$\pm$0.002289 \\
GQKAN-FWP & \underline{0.000055$\pm$0.000046} & \underline{0.000368$\pm$0.000688} & \underline{0.000673$\pm$0.001169} & \underline{0.000770$\pm$0.001331} \\
G-QKANFWP & 0.000680$\pm$0.001339 & 0.000661$\pm$0.001311 & 0.001995$\pm$0.001637 & 0.002087$\pm$0.001709 \\
GQKAN-QKANFWP & \textbf{0.000025$\pm$0.000027} & \textbf{0.000011$\pm$0.000012} & \textbf{0.000015$\pm$0.000011} & \textbf{0.000021$\pm$0.000024} \\
\cmidrule(lr){1-5}
\multicolumn{5}{l}{\textbf{\textit{Damped SHM}}} \\
\addlinespace[2pt]
QFWP & 0.000233$\pm$0.000145 & 0.003460$\pm$0.004777 & 0.001103$\pm$0.000704 & 0.034814$\pm$0.020245 \\
GQKAN-FWP & \underline{0.000118$\pm$0.000078} & \underline{0.000059$\pm$0.000035} & \underline{0.000097$\pm$0.000052} & \underline{0.000553$\pm$0.000917} \\
G-QKANFWP & 0.002431$\pm$0.001302 & 0.002287$\pm$0.001290 & 0.002151$\pm$0.001073 & 0.002182$\pm$0.001095 \\
GQKAN-QKANFWP & \textbf{0.000089$\pm$0.000071} & \textbf{0.000036$\pm$0.000025} & \textbf{0.000019$\pm$0.000016} & \textbf{0.000043$\pm$0.000034} \\

\bottomrule
\end{tabular}
}
\end{table}

\begin{table}[!t]
\centering
\caption{Final test loss (MSE, mean \(\pm\) std over 5 seeds) on the \textbf{NARMA5} and \textbf{NARMA10} datasets. Best/second-best results are shown in \textbf{bold}/\underline{underlined}.}
\label{tab:narma}
\resizebox{\textwidth}{!}
{
\begin{tabular}{lcccc}
\toprule
Model & Window-size=8 & Window-size=16 & Window-size=32 & Window-size=64 \\
\midrule
\multicolumn{5}{l}{\textbf{\textit{Narma5}}} \\
\addlinespace[2pt]
QFWP & \underline{0.000013$\pm$0.000011} & 0.000035$\pm$0.000017 & 0.000113$\pm$0.000032 & 0.000180$\pm$0.000069 \\
GQKAN-FWP & 0.000021$\pm$0.000008 & 0.000036$\pm$0.000021 & \underline{0.000046$\pm$0.000035} & \underline{0.000079$\pm$0.000069} \\
G-QKANFWP & \textbf{0.000005$\pm$0.000004} & \textbf{0.000012$\pm$0.000006} & \textbf{0.000011$\pm$0.000007} & \textbf{0.000010$\pm$0.000003} \\
GQKAN-QKANFWP & 0.000020$\pm$0.000014 & \underline{0.000028$\pm$0.000018} & 0.000048$\pm$0.000035 & 0.000087$\pm$0.000063 \\
\cmidrule(lr){1-5}
\multicolumn{5}{l}{\textbf{\textit{Narma10}}} \\
\addlinespace[2pt]
QFWP  & \textbf{0.000043$\pm$0.000044} & \textbf{0.000002$\pm$0.000002} & 0.000131$\pm$0.000033 & 0.000131$\pm$0.000050 \\
GQKAN-FWP & 0.000077$\pm$0.000019 & 0.000071$\pm$0.000028 & 0.000074$\pm$0.000011 & 0.000108$\pm$0.000042 \\
G-QKANFWP & \underline{0.000057$\pm$0.000010} & \underline{0.000011$\pm$0.000008} & \textbf{0.000016$\pm$0.000012} & \textbf{0.000020$\pm$0.000015} \\
GQKAN-QKANFWP & 0.000100$\pm$0.000066 & 0.000052$\pm$0.000031 & \underline{0.000055$\pm$0.000029} & \underline{0.000108$\pm$0.000063} \\
\bottomrule
\end{tabular}
}
\end{table}

\begin{table}[!t]
\centering
\caption{Final test loss (MSE, mean \(\pm\) std over 5 seeds) on the \textbf{Delayed Quantum Control} and \textbf{Jaynes-Cummings} datasets. Best/second-best results are shown in \textbf{bold}/\underline{underlined}.}
\label{tab:quantum}
\resizebox{\textwidth}{!}{
\begin{tabular}{lcccc}
\toprule
Model & Window-size=8 & Window-size=16 & Window-size=32 & Window-size=64 \\
\midrule
\multicolumn{5}{l}{\textbf{\textit{Delayed Quantum Control}}} \\
\addlinespace[2pt]
QFWP & 0.000701$\pm$0.001325 & 0.003119$\pm$0.003364 & 0.015240$\pm$0.019944 & 0.016752$\pm$0.026858 \\
GQKAN-FWP & \textbf{0.000033$\pm$0.000041} & \underline{0.000053$\pm$0.000077} & \textbf{0.000053$\pm$0.000038} & \underline{0.000135$\pm$0.000088} \\
G-QKANFWP & 0.000153$\pm$0.000041 & 0.000272$\pm$0.000212 & 0.000142$\pm$0.000026 & \textbf{0.000117$\pm$0.000009} \\
GQKAN-QKANFWP & \underline{0.000048$\pm$0.000070} & \textbf{0.000041$\pm$0.000049} & \underline{0.000089$\pm$0.000054} & 0.000221$\pm$0.000334 \\
\cmidrule(lr){1-5}
\multicolumn{5}{l}{\textbf{\textit{Jaynes-Cummings}}} \\
\addlinespace[2pt]
QFWP & \textbf{0.000092$\pm$0.000117} & 0.010183$\pm$0.019995 & 0.021960$\pm$0.034133 & 0.138087$\pm$0.121635 \\
GQKAN-FWP & \underline{0.000177$\pm$0.000225} & \textbf{0.000070$\pm$0.000074} & \underline{0.000283$\pm$0.000471} & \textbf{0.000037$\pm$0.000044} \\
G-QKANFWP & 0.000677$\pm$0.000046 & 0.000664$\pm$0.000041 & 0.000600$\pm$0.000294 & 0.000666$\pm$0.000357 \\
GQKAN-QKANFWP & 0.000201$\pm$0.000254 & \underline{0.000159$\pm$0.000227} & \textbf{0.000196$\pm$0.000269} & \underline{0.000322$\pm$0.000314} \\
\bottomrule
\end{tabular}
}
\end{table}

\subsection{Real-World Direct Multi-Step Prediction}
\label{subsec:multistep}

While \cref{subsec:time_series} demonstrates our models' ability to capture synthetic dynamics via single-step prediction, practical applications often demand long-horizon, multi-step forecasting in complex, non-stationary environments. To evaluate this, we apply GQKAN-QKANFWP to solar cycle forecasting, a well-known challenge in solar physics \cite{bhowmik2018prediction, petrovay2020solar,pesnell2008predictions}, and conclude this section with an inference test on available real quantum hardware~\cref{subsec:hardware}. We use 3,326 monthly averaged sunspot records spanning 1749--2026 from the World Data Center SILSO\footnote{Data available at \url{http://sidc.be/silso/datafiles}.} \cite{clette2016new}. Following \cite{benson2020SC25}, we frame this as a univariate multi-step forecasting task: a sliding window maps a 528-month input (roughly four solar cycles) to a 132-month forecast horizon (one cycle). The output at the final time step serves as our prediction. To prioritize the accurate prediction of solar cycle maxima \cite{bhowmik2018prediction, petrovay2020solar}, we optimize a peak-aware MSE loss: \begin{equation*}
\mathcal{L} = \frac{1}{B} \sum (\mathbf{y} - \mathbf{\hat{y}})^2 (1 + \alpha \mathbf{y}),
\end{equation*} where $B$ is the batch size and $\alpha=1.0$ scales the penalty for peak values. We also report two additional metrics to quantify peak prediction accuracy in terms of absolute amplitude difference and temporal displacement \cite{bhowmik2018prediction}: peak amplitude error, which measures the absolute difference in sunspot numbers: 
\begin{equation*}
\mathrm{PAE} = \frac{1}{M} \sum_{i=1}^{M} \left| \max(\mathbf{y}^{(i)}) - \max(\mathbf{\hat{y}}^{(i)}) \right|
\end{equation*}
and Peak Timing Error, which measures the temporal displacement in months: 
\begin{equation*}
\mathrm{PTE} = \frac{1}{M} \sum_{i=1}^{M} \left| \operatorname{argmax}(\mathbf{y}^{(i)}) - \operatorname{argmax}(\mathbf{\hat{y}}^{(i)}) \right|,
\end{equation*} 
where $M$ is the number of test sequences, and $\mathbf{y}^{(i)}, \mathbf{\hat{y}}^{(i)}$ are the ground-truth and predicted sequences, respectively.

We benchmark GQKAN-QKANFWP against the WaveNet-LSTM and LSTM baselines 
from \cite{benson2020SC25} (LSTM-L with $H=132$, LSTM-S with $H=64$), as well as the Vanilla RNN and Modified Echo State Network (MESN) baselines from \cite{espuna2023forecasting}. To isolate architectural capacity from training-protocol confounds, we re-run all baselines under a single standardized protocol rather than strictly replicating prior setups. Using the raw monthly data normalized to $[0, 1]$, a 528-month input window, and a 132-month forecast horizon, we chronologically partition the dataset into 80\% training, 10\% validation, and 10\% test sets. All baselines except MESN use batch normalization, 30\% dropout, a batch size of 32, and 100 epochs, evaluated across five random seeds. For these models, learning rates are individually tuned from $\{1, 2, 2.5\} \times 10^{-3}$ on the 
validation split, and the checkpoint with the lowest average validation loss is used for testing. For MESN, we adopt the reservoir configuration of \cite{espuna2023forecasting} unchanged, except for setting the output horizon to 132 instead of 129 to match our forecast window. Its input delay embedding is drawn from the tail of the same 528-month input window fed to the other baselines, so the test split is identical across all models. Because MESN is fit in one pass by closed-form weighted ridge regression, it has no learning rate or epoch budget. Because our protocol evaluates long, raw input sequences rather than the 13-month smoothed data and variable windows used in \cite{espuna2023forecasting}, the baseline metrics reported here reflect performance under stricter conditions. Similarly, our 
WaveNet-LSTM reproduction differs quantitatively from \cite{benson2020SC25} due to our peak-aware loss and our chronological, strictly unseen test split rather than their 5-fold cross-validation scheme.

\begin{figure}[!t]
\centering
\includegraphics{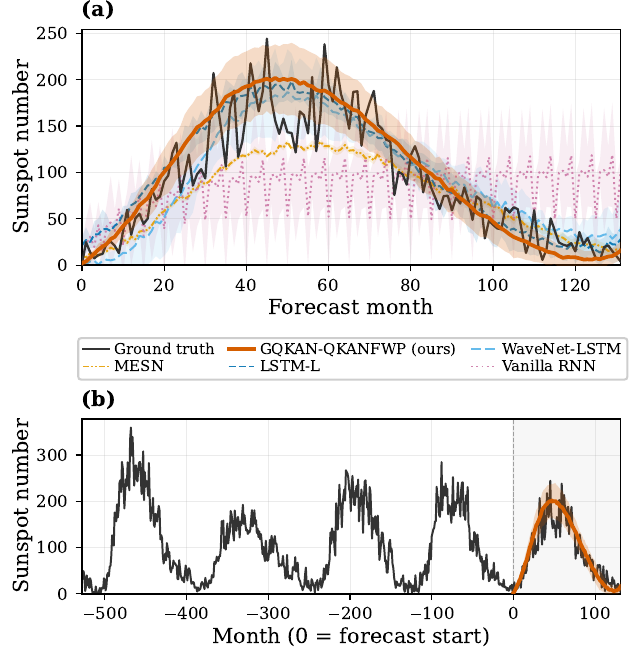}
\caption{\textbf{Forecasting Solar Cycle 23 (test set).} \textbf{(a)} Mean forecasts and shaded $\pm 1\sigma$ bands across 5 random seeds for each model. While the ground truth (black) exhibits substantial 
month-to-month variability, the GQKAN-QKANFWP $\pm 1\sigma$ envelope (orange shading) contains the ground truth throughout the rising, peak, and descending phases of the cycle. Among the baselines, only LSTM-L 
produces a mean prediction that overlaps the GQKAN-QKANFWP envelope, yet uses approximately 7$\times$ more parameters (see \cref{tab:sunspot}). The remaining baselines either systematically under-predict the peak or fail to produce a coherent cycle. 
\textbf{(b)} Full context: four preceding solar cycles followed by the forecast window (shaded).}
\label{fig:sunspot_snapshot}
\end{figure}

\begin{table*}[!t]
\centering
\caption{Solar cycle forecasting performance across models. Baseline architectures follow \cite{benson2020SC25} (LSTM, WaveNet-LSTM) and \cite{espuna2023forecasting} (Vanilla RNN, MESN). Gradient-based baselines use their individually tuned learning rates; MESN is fit by closed-form weighted ridge regression and has no learning rate. Best results in \textbf{bold}, second-best \underline{underlined}. Values are mean $\pm$ std over 5 seeds.}
\label{tab:sunspot}
\resizebox{\textwidth}{!}{
\begin{tabular}{l r c c c c}
\toprule
Model & Params & Selected LR & Scaled MSE $\downarrow$ & PAE $\downarrow$ & PTE $\downarrow$ \\
\midrule
WaveNet-LSTM \cite{benson2020SC25} & 167,196 & $2\!\times\!10^{-3}$ & $0.0205 \pm 0.0056$ & $43.60 \pm 7.26$ & $27.35 \pm 3.01$ \\
LSTM-L \cite{benson2020SC25} & 89,100 & $2.5\!\times\!10^{-3}$ & $\underline{0.0180 \pm 0.0020}$ & $\underline{39.98 \pm 3.82}$ & $\underline{23.34 \pm 1.93}$ \\
LSTM-S & 25,860 & $2.5\!\times\!10^{-3}$ & $0.0194 \pm 0.0017$ & $41.95 \pm 1.32$ & $24.32 \pm 0.74$ \\
MESN \cite{espuna2023forecasting}  & 132,132 & -- & $0.0458 \pm 0.0042$ & $69.14 \pm 4.19$ & $31.81 \pm 2.09$ \\
Vanilla RNN \cite{espuna2023forecasting} & 11,525 & $2\!\times\!10^{-3}$ & $0.0368 \pm 0.0078$ & $54.27 \pm 11.48$ & $39.59 \pm 8.41$ \\
\midrule
GQKAN-QKANFWP (ours) & 12,474 & $2.5\!\times\!10^{-3}$ & $\mathbf{0.0168 \pm 0.0016}$ & $\mathbf{39.59 \pm 2.82}$ & $\mathbf{21.89 \pm 1.57}$ \\
\bottomrule
\end{tabular}
}
\end{table*}

\begin{figure*}[!t]
\centering
\includegraphics[width=\textwidth]{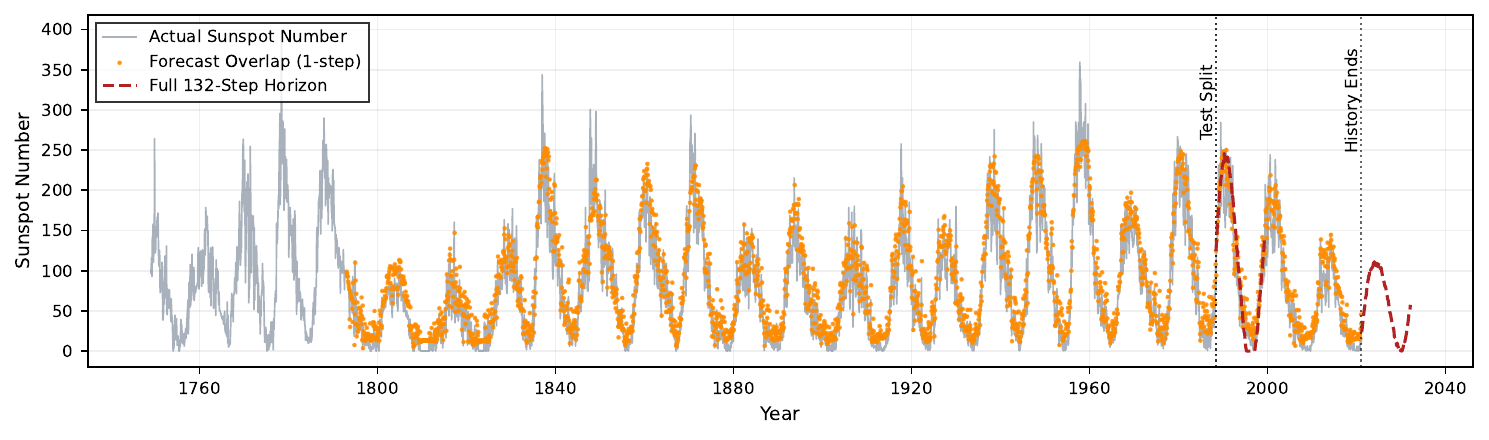}
\caption{\textbf{Solar cycle forecasting results for GQKAN-QKANFWP.} Orange markers represent continuous one-step-ahead forecasts stacking the first step of the 132-step horizon across overlapping sliding windows, while the red curves denote full 132-step predictions on Solar Cycle 22 and the ongoing Solar Cycle 25 generated from a single input window. The ``Test Split'' line marks the beginning of the test set. Additionally, the ``History Ends'' line indicates the separation between the available historical data and the model's future prediction.}
\label{fig:sunspot_forecast}
\end{figure*}

As summarized in \cref{tab:sunspot}, GQKAN-QKANFWP attains the lowest scaled MSE, PAE, and PTE among all evaluated models. This suggests that its advantage reflects a joint improvement in overall fit and peak 
prediction rather than a trade-off between them. This is achieved with only 12.5k parameters, roughly $7$--$13\times$ fewer than the competitive baselines (LSTM-L: 89k; MESN: 132k; WaveNet-LSTM: 167k). The model also exhibits the lowest seed-to-seed variance on scaled MSE ($\pm 0.0016$) and the second-lowest variance on PAE and PTE (after LSTM-S), indicating the reported gains are stable rather than seed-dependent. \cref{fig:sunspot_snapshot} visualizes the per-model performance on Solar Cycle 23 (SC23) as seed-averaged forecasts with $\pm 1\sigma$ bands. GQKAN-QKANFWP's $\pm 1\sigma$ envelope (orange shading) contains the ground truth throughout the rising, peak, and descending phases of the cycle. LSTM-L is the only baseline whose mean prediction overlaps this envelope throughout the cycle but has approximately 7$\times$ more parameters. The remaining baselines either systematically under-predict the solar maximum or fail to form a coherent cycle. \cref{fig:sunspot_forecast} illustrates the overall forecasting behavior: orange markers denote continuous one-step-ahead forecasts obtained by stacking the first value of each 132-month horizon across overlapping sliding windows, while red curves show the full 132-step predictions for SC22 and the ongoing SC25, each generated from a single input window in the test set. The model captures the macroscopic cycle structure on SC22 and produces a stable projection for SC25. Overall, these results suggest that GQKAN-QKANFWP can process long input sequences (528 months) and produce direct multi-step forecasts over a long horizon (132 months) while maintaining low overall MSE and preserving the amplitude and timing of the cycle maxima, a regime in which substantially larger classical recurrent baselines, under the same training protocol, tend to degrade on at least one of these axes.

\begin{table}[!t]
\centering
\caption{Execution of GQKAN-QKANFWP's fast programmer on QPUs. Relative MSE is the MSE with respect to the noiseless simulator output horizon.}
\label{tab:hardware}
\resizebox{\textwidth}{!}{
\begin{tabular}{l r r r r r}
\toprule
Device & Shots $N$ & Scaled MSE $\downarrow$ & PAE $\downarrow$ & PTE $\downarrow$ & Relative MSE $\downarrow$ \\
\midrule
\texttt{Forte-1} & 1024 & 0.00601 & 2.91 & 12.0 & 0.00082 \\
\midrule
\texttt{ibm\_aachen} & 1 & 0.00716 & 1.12 & 12.0 & 0.00759 \\
 & 16 & 0.01065 & 17.45 & 16.0 & 0.04028 \\
 & 64 & 0.00774 & 12.26 & 12.0 & 0.00497 \\
 & 256 & 0.00585 & 0.33 & 12.0 & 0.00175 \\
 & 1024 & 0.00574 & 1.44 & 12.0 & 0.00085 \\
\midrule
Simulator & ---  & 0.00593 & 0.73  & 12.0 & --- \\
\bottomrule
\end{tabular}
}
\end{table}

\subsubsection{Execution on real quantum hardware}
\label{subsec:hardware}
To validate NISQ compatibility, we deploy the fast programmer of a trained GQKAN-QKANFWP onto IonQ's \texttt{Forte-1} and IBM's \texttt{ibm\_aachen} QPUs. The slow programmer and the gated fast-parameter recursion are evaluated classically, while the fast programmer runs on the QPUs, isolating the DARUAN module to quantify noise impact without retraining. The model utilizes 200 single-qubit DARUAN circuits, which execute in parallel. On the 156-qubit \texttt{ibm\_aachen} (Heron r3), we selected 100 qubits via a composite calibration score (readout error $\in [2.6, 9.0]\times10^{-3}$, SX error $\in [0.8, 6.3]\times10^{-4}$), applying Qiskit's \texttt{optimization\_level=1} without further error mitigation. For the 36-qubit \texttt{Forte-1}, we uniformly pinned the first 20 qubits as Braket's device-level aggregates (single-qubit randomized-benchmarking fidelity $\approx 0.9998$, SPAM fidelity $\approx 0.9937$) preclude meaningful per-qubit ranking. We evaluate the full shot sweep $N \in \{1, 16, 64, 256, 1024\}$ on \texttt{ibm\_aachen} to characterize the convergence-with-shots behavior, and report the converged high-shot setting $N=1024$ on \texttt{Forte-1} as an independent cross-platform check. We report scaled MSE, PAE, PTE, and the relative MSE with respect to the noiseless simulator. As shown in \cref{tab:hardware} and \cref{fig:hardware}, GQKAN-QKANFWP's forecasts on both QPUs converge within $\sim 8.5\times10^{-4}$ of relative MSE at $N=1024$ shots---saturating the $\mathcal{O}(1/N) \sim10^{-3}$ statistical floor imposed by shot-noise-limited expectation-value estimation \cite{giovannetti2004quantum,knill2007optimal}.

\begin{figure}[!t]
\centering
\includegraphics{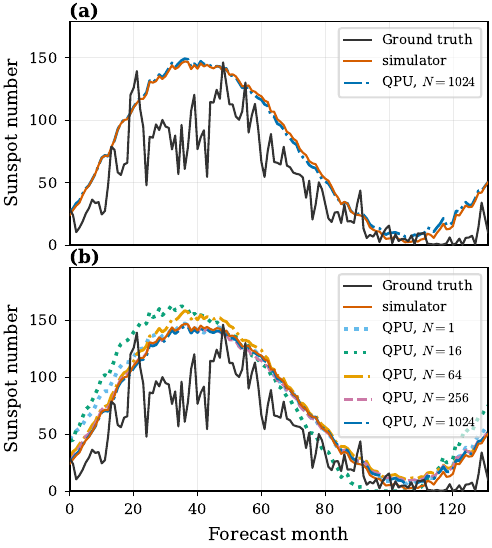}
\caption{\textbf{Forecasting Solar Cycle 24 from GQKAN-QKANFWP's fast programmer executed on QPUs.} (a) \texttt{Forte-1} at $N = 1024$ shots. (b) \texttt{ibm\_aachen} across shot counts $N \in \{1, 16, 64, 256, 1024\}$; forecasts converge to the noiseless simulator as $N$ increases, recovering cycle shape and peak within $\sim10^{-3}$ relative MSE at $N{=}1024$.}
\label{fig:hardware}
\end{figure}

\subsection{Reinforcement learning}
\label{subsec:rl}
Following the benchmark of \cite{chen2024qfwp}, we evaluate RL agents on the MiniGrid-Empty environment \cite{MinigridMiniworld23} trained with asynchronous advantage actor-critic (A3C) \cite{chen2023asynchronous}. Since QFWP has been shown to converge substantially faster than QLSTM \cite{chen2024qfwp,ceschini2026qfwp}, we restrict the comparison to the FWP variants. At each step the agent receives a 147-dimensional observation ($7{\times}7{\times}3$ flattened local viewport) and selects among seven discrete actions, with sparse reward $R = 1 - 0.9\,\frac{\text{steps}}{\text{max steps}}$ on reaching the goal ($\text{max steps} = 4n^2$ for an $n{\times}n$ grid) and zero otherwise. We evaluate $n \in \{5,6,8,16\}$ as shown in~\cref{fig:minigrids}. Each model is trained for 10{,}000 episodes with 80 workers, learning rate $1{\times}10^{-4}$, $\beta_1{=}0.92$, $\beta_2{=}0.999$, rollout length $L{=}5$, and a discount factor $\gamma{=}0.9$. Reported rewards are smoothed by a 100-episode moving average and averaged across workers and seeds.

\begin{figure}[!t]
\centering
\includegraphics[width=\textwidth,trim={0cm 15.1cm 28cm 0cm}, clip]{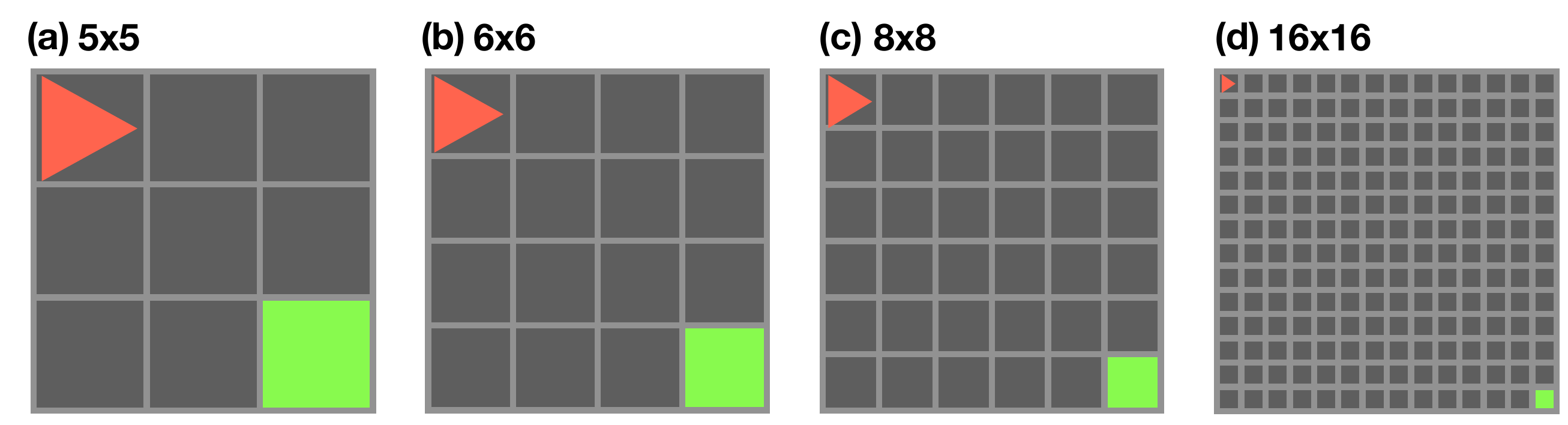}
\caption{\textbf{MiniGrid-Empty environments.} The agents are evaluated across environments of increasing scale: (a) 5$\times$5, (b) 6$\times$6, (c) 8$\times$8, and (d) 16$\times$16.}
\label{fig:minigrids}
\end{figure}

\cref{fig:rl_multi_env}(a) shows that adding the gate generally improves both convergence stability and final performance. Final rewards across all environment sizes are reported in \cref{tab:minigrid_results}, with full learning curves in \cref{fig:rl_multi_env}(b)--(e). Ungated QFWP degrades substantially as the grid grows, while the gated variants remain stable. Among them, G-QKANFWP attains the highest or second-highest final reward in the larger environments ($6{\times}6$, $8{\times}8$, and $16{\times}16$) despite slightly slower early convergence, indicating that the HQKAN fast programmer retains capacity in more complex state spaces. The fully HQKAN-based GQKAN-QKANFWP reaches competitive rewards with substantially fewer parameters: on the $16{\times}16$ task it achieves $0.974\pm0.001$ with only 1{,}114 trainable parameters, versus $0.975\pm0.001$ for G-FWP at 2{,}665 parameters, a $\sim$58\% reduction at essentially matched performance. \cref{fig:rl_multi_env}(e) further shows that GQKAN-FWP converges slightly faster than classical G-FWP on the $16{\times}16$ grid, suggesting a training-efficiency benefit from the quantum-inspired slow programmer even when the fast programmer remains classical.

\begin{figure}[!t]
\centering

\includegraphics[width=\textwidth]{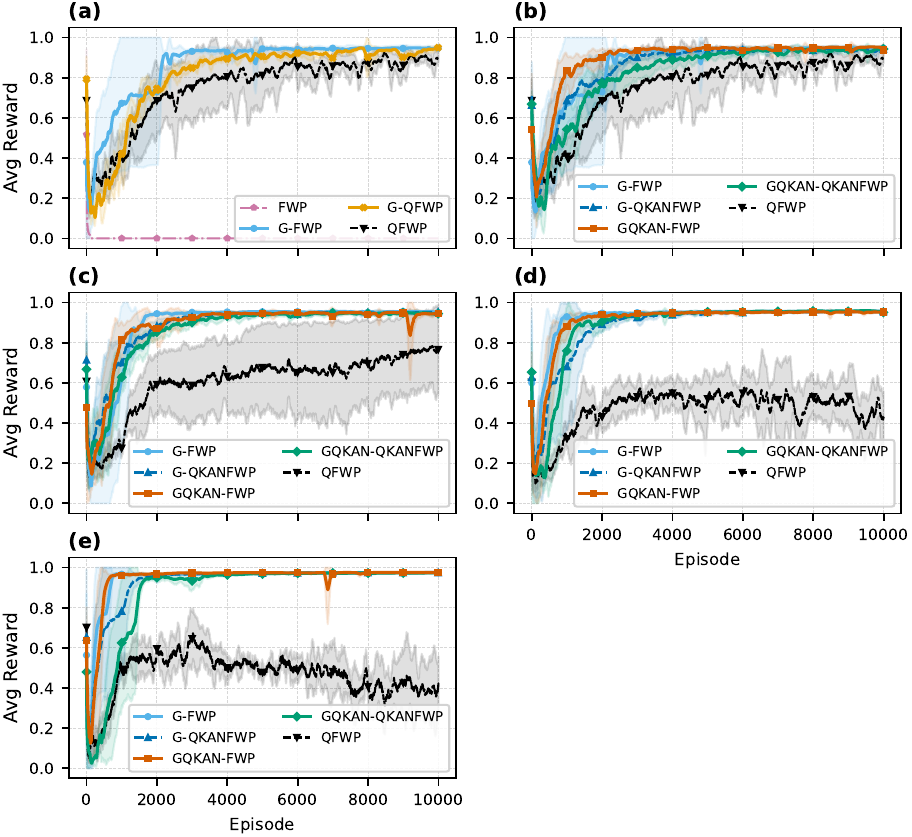}
\caption{\textbf{Model performance on MiniGrid-Empty environments.} The curves show mean episodic reward with shaded regions denoting standard deviation across 5 seeds. \textbf{(a)} Gated-vs-ungated ablation on the $5{\times}5$ grid: gated architectures yield higher stability and asymptotic rewards than their ungated counterparts. \textbf{(b)--(e)} Scaling across $5{\times}5$, $6{\times}6$, $8{\times}8$, and $16{\times}16$ grids for the top-performing variants.}
\label{fig:rl_multi_env}

\end{figure}

\begin{table}[!t]

\centering
\caption{Final reward on MiniGrid-Empty tasks (mean \(\pm\) std over five seeds). Higher is better. Best/second-best results in each column are shown in \textbf{bold}/\underline{underlined}.}
\label{tab:minigrid_results}

{
\resizebox{\textwidth}{!}{
\begin{tabular}{lcccc}
\toprule
Model & 5$\times$5 & 6$\times$6 & 8$\times$8 & 16$\times$16 \\
\midrule
QFWP             & 0.904 $\pm$ 0.027             & 0.765 $\pm$ 0.224             & 0.423 $\pm$ 0.130             & 0.423 $\pm$ 0.135 \\
G-FWP            & \underline{0.948 $\pm$ 0.012} & \underline{0.951 $\pm$ 0.010} & \underline{0.954 $\pm$ 0.005} & \textbf{0.975 $\pm$ 0.001} \\
G-QFWP           & \textbf{0.950 $\pm$ 0.005}    & 0.949 $\pm$ 0.007             & 0.946 $\pm$ 0.007             & 0.969 $\pm$ 0.007 \\
GQKAN-QFWP       & 0.942 $\pm$ 0.007             & 0.936 $\pm$ 0.012             & 0.945 $\pm$ 0.011             & 0.972 $\pm$ 0.000 \\
GQKAN-FWP        & 0.938 $\pm$ 0.029             & 0.945 $\pm$ 0.012             & 0.953 $\pm$ 0.006 & \textbf{0.975 $\pm$ 0.001} \\
G-QKANFWP        & 0.943 $\pm$ 0.015             & \textbf{0.953 $\pm$ 0.005}    & \textbf{0.955 $\pm$ 0.006}    & \underline{0.974 $\pm$ 0.001} \\
GQKAN-QKANFWP    & 0.943 $\pm$ 0.010             & 0.945 $\pm$ 0.007             & 0.951 $\pm$ 0.008             & \underline{0.974 $\pm$ 0.001} \\
\bottomrule
\end{tabular}
}
}

\end{table}

\section{Conclusion}
We presented gated QKAN-FWP, a quantum-inspired sequence learning framework that mitigates the scalability and execution bottlenecks inherent to the NISQ era. By relying exclusively on HQKAN—modules empirically shown capable of scaling to LLMs \cite{jiang2025qkan}—our framework inherits strong scalability while circumventing the costs of multi-qubit entanglement. A scalar-gated fast-weight update further stabilizes parameter evolution, a property we interpret theoretically through adaptive memory kernels, geometric boundedness, and a parallel-scan-compatible recursion. Empirical evaluations highlight the architecture's versatility and parameter efficiency. In time-series prediction, HQKAN-based gated variants exhibited the greatest robustness over extended input windows. On real-world solar cycle forecasting, our 12.5k-parameter GQKAN-QKANFWP achieved lower scaled MSE, PAE, and PTE than a suite of classical recurrent baselines spanning 11.5k to 167k parameters—up to 13× larger than ours—including LSTM-L, WaveNet-LSTM, and MESN. Furthermore, deploying the trained fast programmer on IonQ's \texttt{Forte-1} and IBM's \texttt{ibm\_aachen} recovered forecasting accuracy within $\sim10^{-3}$ relative MSE of the simulator at 1024 shots, confirming the NISQ compatibility of the single-qubit design. In MiniGrid RL task, the framework achieved competitive performance with a 58\% parameter reduction relative to prior baselines.

Although original KAN architectures face optimization challenges in ultra-large-scale scenarios \cite{noorizadegan2026practitionersguidekolmogorovarnoldnetworks,yang2025kolmogorovarnold}, our framework structurally mitigates this burden. Processing sequences autoregressively keeps the HQKAN input dimension independent of sequence length, and operating within HQKAN's reduced-dimensional latent space compresses computational overhead; for tasks with ultra-large input/output dimensions, this overhead can be further managed via structural grouping \cite{yang2025kolmogorovarnold,jiang2025qkan_github}. Pushing these dimensional limits will therefore drive our future work, alongside analyzing optimization dynamics and extending execution on physical quantum hardware beyond inference.

\section*{Acknowledgment}
K.-C. Peng, J.-C. Jiang, Y.-C. Hsu and C.-H. Lin thank the National Center for High-Performance Computing (NCHC), National Institutes of Applied Research (NIAR), Taiwan, for providing computational and storage resources supported by the National Science and Technology Council (NSTC), Taiwan, under Grants No. NSTC 114-2119-M-007-013.
H.-S. Goan acknowledges support from the NSTC, Taiwan, under Grants No. NSTC 113-2112-M-002-022-MY3, No. NSTC 113-2119-M-002-021, No. NSTC 114-2119-M-002-018, No. NSTC 114-2119-M-002-017-MY3, and from the National Taiwan University under Grants No. NTU-CC-115L8937, No. NTU-CC-115L893704 and No. NTU-CC-115L8512. H.-S. Goan is also grateful for the support of the “Center for Advanced Computing and Imaging in Biomedicine (NTU-115L900702)” through the Featured Areas Research Center Program within the framework of the Higher Education Sprout Project by the Ministry of Education (MOE), Taiwan, the support of Taiwan Semiconductor Research Institute (TSRI) through the Joint Developed Project (JDP) and the support from the Physics Division, National Center for Theoretical Sciences, Taiwan. EJK acknowledges financial support from the NSTC of Taiwan under Grant No.~NSTC~114-2112-M-A49-036-MY3. The authors acknowledge the National Taiwan University--IBM Quantum Hub (NTU--IBM Q Hub) and Cloud Computing Center for Quantum Science \& Technology at National Cheng Kung University for providing IBM Q system and Amazon Braket platforms.

\bibliographystyle{alpha}
\bibliography{ref}

\appendix

\section{Parallel Evaluation of the Gated Fast-Weight Recursion}
\label{app:parallel_depth}

This appendix establishes that the gated fast-weight recursion admits efficient parallel evaluation in the forward pass. We first reformulate the update~\cref{eq:gated} as an affine recurrence with a scalar multiplier and show that the induced composition operator on parameter pairs is associative. This structure allows us to express the trajectory $\{W_t\}_{t=1}^{T}$ as a prefix product, which can be evaluated using a work-efficient parallel scan. We then analyze the resulting complexity, showing that the full trajectory can be computed in $O(\log T)$ depth on an unbounded-processor PRAM and in $O(T/p + \log p)$ time on $p$ processors. Finally, we contrast this behavior with the $\Omega(T)$ sequential depth required for general nonlinear recurrences, highlighting the structural source of parallelism in the gated update.

\subsection{Affine Reformulation and Associativity}
\label{app:affine}

Throughout this appendix we work with the gated fast-weight recursion
\begin{equation}
W_{t+1} \;=\; g_t\, W_t \,+\, (1 - g_t)\,\Delta W_t,
\qquad g_t \in [0,1],
\quad t = 1, 2, \dots, T,
\label{eq:app_gated}
\end{equation}
with initial fast parameters $W_1 \in \mathbb{R}^{m\times n}$. The gate $g_t \in [0,1]$ and proposal $\Delta W_t \in \mathbb{R}^{m\times n}$ are produced by the slow programmer from the input $x_t$ alone:
\begin{equation}
\Delta W_t \;=\; S_{\Delta}(x_t),
\qquad
g_t \;=\; \sigma\!\bigl(S_g(x_t)\bigr),
\label{eq:app_slow}
\end{equation}
so that $\{(\Delta W_t, g_t)\}_{t=1}^{T}$ depend only on the input sequence $\{x_t\}_{t=1}^{T}$ and slow-programmer weights, never on previous fast parameters $W_{<t}$. This independence is what makes the entire set of pairs computable in a single parallel pass over time.

The remaining task — and the technical content of this appendix — is to show that the parameter trajectory $\{W_t\}_{t=1}^{T+1}$ \emph{itself} can be resolved in parallel, despite the apparent sequential coupling in~\cref{eq:app_gated}.

Define
\begin{equation}
a_t \;:=\; g_t \in [0,1],
\qquad
b_t \;:=\; (1 - g_t)\,\Delta W_t \in \mathbb{R}^{m\times n}.
\label{eq:app_ab_def}
\end{equation}
Substituting into~\cref{eq:app_gated} yields
\begin{equation}
W_{t+1} \;=\; a_t\, W_t \,+\, b_t,
\label{eq:app_affine}
\end{equation}
which is affine in $W_t$ with a \emph{scalar} multiplier $a_t \in [0,1]$ acting by ordinary scalar multiplication on $W_t \in \mathbb{R}^{m\times n}$.

The scalar nature of $a_t$ is essential. If, instead, the multiplier were a matrix $A_t \in \mathbb{R}^{m\times m}$ acting by left multiplication (as in element-wise gated RNNs), composition would still be associative, but each composed element would be a dense $m\times m$ matrix; the prefix scan would carry $O(m^2)$ payload per node, and gradient composition would require multiplying $T$ dense matrices, reintroducing the conditioning issues we wish to avoid. The single scalar gate $g_t$ keeps both the scan payload dimension and the gradient chain trivial.

Consider the set of \emph{affine pairs} $\mathcal{A} := \mathbb{R} \times \mathbb{R}^{m\times n}$, where the first component is a scalar multiplier and the second is a matrix offset. Define the binary operator $\circ : \mathcal{A} \times \mathcal{A} \to \mathcal{A}$ by
\begin{equation}
(a', b') \circ (a, b) \;:=\; \bigl(a'a,\; a'b + b'\bigr).
\label{eq:app_compose}
\end{equation}
Geometrically, $(a, b)$ represents the affine map $W \mapsto aW + b$, and~\cref{eq:app_compose} expresses the composition of two such maps in the conventional left-to-right order:
\begin{equation}
\bigl[(a',b') \circ (a,b)\bigr](W)
\;=\; a'(aW + b) + b'
\;=\; (a'a)W + (a'b + b').
\label{eq:app_compose_geom}
\end{equation}

The associativity of $\circ$ is a specialization of the standard 
recurrence-to-scan reduction for first-order linear recurrences 
\cite[\S4.1]{blelloch1990prefix} to the case where the multiplier is a scalar in $\mathbb{R}$ and the offset is a matrix in $\mathbb{R}^{m\times n}$. We record the proof in our setting both for completeness and to fix notation for the gradient analysis in \cref{app:gradient}.

\begin{lemma}[Associativity of $\circ$]
\label{lem:assoc}
For any $(a,b),\,(a',b'),\,(a'',b'') \in \mathcal{A}$,
\[
\bigl((a'',b'') \circ (a',b')\bigr) \circ (a,b)
\;=\;
(a'',b'') \circ \bigl((a',b') \circ (a,b)\bigr).
\]
\end{lemma}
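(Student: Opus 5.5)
The plan is to verify the identity by direct expansion of both sides using the definition \cref{eq:app_compose}, and then to give the conceptual reason via the affine-map interpretation \cref{eq:app_compose_geom}.

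First I will compute the left-hand side. By \cref{eq:app_compose}, $(a'',b'')\circ(a',b') = (a''a',\, a''b' + b'')$. Composing this pair with $(a,b)$ gives
\[
\bigl((a''a')a,\; (a''a')b + a''b' + b''\bigr).
\]
Next I will compute the right-hand side. Here $(a',b')\circ(a,b) = (a'a,\, a'b+b')$, and composing $(a'',b'')$ with it gives
\[
\bigl(a''(a'a),\; a''(a'b+b') + b''\bigr) = \bigl(a''(a'a),\; a''a'b + a''b' + b''\bigr).
\]

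The two first components agree by associativity of real multiplication. The two second components agree because scalar multiplication on $\mathbb{R}^{m\times n}$ is compatible with field multiplication, $(a''a')b = a''(a'b)$, and distributes over matrix addition, $a''(a'b+b') = a''a'b + a''b'$. These are just the vector-space axioms of $\mathbb{R}^{m\times n}$ over $\mathbb{R}$.

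As a cross-check, I will note that the map $(a,b)\mapsto(W\mapsto aW+b)$ sends $\circ$ to function composition by \cref{eq:app_compose_geom}. Function composition is always associative. This map is also injective: evaluating at $W=0$ recovers $b$, and evaluating at any nonzero $W$ then recovers $a$. So the identity could equally be pulled back from associativity of composition.

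There is no real obstacle. The only point needing care is keeping the left-to-right order of the offsets consistent, so that $a''$ multiplies $b'$ and $a''a'$ multiplies $b$ on both sides. Commutativity of scalars is not needed, although it holds here.
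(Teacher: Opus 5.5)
Your proposal is correct and follows the paper's proof essentially verbatim: both sides are expanded directly from the definition of $\circ$, and each reduces to $(a''a'a,\; a''a'b + a''b' + b'')$. Your additional cross-check, which embeds pairs injectively as affine maps so that associativity follows from that of function composition, is a valid alternative justification, but it is not needed here.
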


\begin{proof}
We expand both sides directly from the definition~\cref{eq:app_compose}.

\emph{Left-hand side.} First compute the inner composition:
\[
(a'',b'') \circ (a',b') \;=\; (a''a',\; a''b' + b'').
\]
Then compose with $(a,b)$:
\begin{align*}
\bigl(a''a',\; a''b' + b''\bigr) \circ (a,b)
&= \bigl((a''a')\,a,\;\; (a''a')\,b + (a''b' + b'')\bigr) \\
&= \bigl(a''a'a,\;\; a''a'b + a''b' + b''\bigr).
\end{align*}

\emph{Right-hand side.} First compute the inner composition:
\[
(a',b') \circ (a,b) \;=\; (a'a,\; a'b + b').
\]
Then compose with $(a'',b'')$:
\begin{align*}
(a'',b'') \circ \bigl(a'a,\; a'b + b'\bigr)
&= \bigl(a''(a'a),\;\; a''(a'b + b') + b''\bigr) \\
&= \bigl(a''a'a,\;\; a''a'b + a''b' + b''\bigr).
\end{align*}

The two expressions are identical, completing the proof.
\end{proof}

Three remarks are in order.

\paragraph{Remark 1 (no constancy or smoothness assumption on the gates).} The proof of \cref{lem:assoc} treats $a, a', a''$ as arbitrary scalars and $b, b', b''$ as arbitrary matrices. Associativity therefore holds pointwise for every triple of pairs, regardless of whether the scalars are constants, time-varying, or input-dependent. In particular, no assumption such as $g_t \equiv g$ or smoothness of $g_t$ in $t$ is required. The gates produced by the slow programmer in~\cref{eq:app_slow} satisfy this trivially.

\paragraph{Remark 2 (identity element).} The pair $(1,\, 0_{m\times n})$ is a two-sided identity for $\circ$:
\[
(1, 0) \circ (a, b) \;=\; (a, b) \;=\; (a, b) \circ (1, 0),
\]
which lets us extend prefix products to indices $\le 0$ when convenient by padding with $(1,0)$.

\paragraph{Remark 3 (non-commutativity).} The operator $\circ$ is \emph{not} commutative: in general,
\[
(a',b') \circ (a,b) \;=\; (a'a,\, a'b + b')
\;\ne\; (aa',\, ab' + b) \;=\; (a,b) \circ (a',b'),
\]
since the offset components differ. This is consistent with the order-sensitive nature of sequence modeling: the proposal at time $k$ should influence $W_{t+1}$ via gates $g_{k+1},\dots,g_t$, not via gates $g_1,\dots,g_{k-1}$. Associativity, but not commutativity, is what enables parallel reassociation while preserving sequence order.

\subsection{Trajectory as a Prefix Product}
\label{app:prefix_form}

We now show that one step of the recursion~\cref{eq:app_affine} is exactly one application of $\circ$, and consequently that the entire trajectory is a prefix product.

\begin{proposition}[Prefix-product form]
\label{prop:prefix}
Let $W_1 \in \mathbb{R}^{m\times n}$, $a_t \in \mathbb{R}$, and $b_t \in \mathbb{R}^{m\times n}$ for $t = 1, \dots, T$, and define $W_{t+1} = a_t W_t + b_t$ as in~\cref{eq:app_affine}. Define the running prefix product
\begin{equation}
P_t \;:=\; (a_t, b_t) \circ (a_{t-1}, b_{t-1}) \circ \cdots \circ (a_1, b_1) \circ (1, W_1),
\qquad t = 1, \dots, T.
\label{eq:app_prefix}
\end{equation}
Then for every $t$,
\begin{equation}
P_t \;=\; \Bigl(\alpha_t,\; W_{t+1}\Bigr),
\qquad
\alpha_t \;=\; \prod_{s=1}^{t} a_s,
\label{eq:app_prefix_form}
\end{equation}
i.e., the second component of the $t$-th prefix product is exactly the fast-parameter state at time $t+1$.
\end{proposition}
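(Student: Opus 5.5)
The plan is to argue by induction on $t$, using \cref{lem:assoc} so that the unparenthesized product in~\cref{eq:app_prefix} is well defined. Associativity lets us group it as
\[
P_t \;=\; (a_t,b_t)\circ\bigl[(a_{t-1},b_{t-1})\circ\cdots\circ(a_1,b_1)\circ(1,W_1)\bigr] \;=\; (a_t,b_t)\circ P_{t-1}.
\]
Here we adopt the convention $P_0 := (1,W_1)$, which is consistent with Remark~2, since $(1,W_1)$ is the empty product applied to the seed pair. The whole statement then reduces to a one-step recursion for $P_t$.

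\emph{Base case} ($t=1$). By~\cref{eq:app_compose},
\[
(a_1,b_1)\circ(1,W_1) = (a_1\cdot 1,\; a_1W_1+b_1) = (\alpha_1, W_2),
\]
where the last equality uses~\cref{eq:app_affine}.

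\emph{Inductive step.} Assume $P_{t-1} = (\alpha_{t-1}, W_t)$. Applying the definition~\cref{eq:app_compose} once gives
\[
P_t = (a_t,b_t)\circ(\alpha_{t-1},W_t) = \bigl(a_t\alpha_{t-1},\; a_tW_t+b_t\bigr).
\]
The first component equals $\prod_{s=1}^t a_s = \alpha_t$. The second equals $W_{t+1}$ by the recursion~\cref{eq:app_affine}. This closes the induction.

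The only step needing care is justifying $P_t = (a_t,b_t)\circ P_{t-1}$, that is, that the regrouping is legitimate. This is exactly what \cref{lem:assoc} provides: by the standard generalized-associativity argument, any bracketing of a finite $\circ$-product gives the same element. Some care is also needed with the left-to-right order, since by Remark~3 $\circ$ is not commutative. The newest pair must sit on the left, matching the map-composition reading~\cref{eq:app_compose_geom}. Equivalently, one can observe that $P_t$ represents the affine map $W\mapsto \alpha_t W + W_{t+1}$, so the second component is its value at $0$. That is the state obtained by feeding $W_1$ through $t$ steps of the recursion.

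As a consistency check, expanding the second component reproduces the unrolled form~\cref{eq:gated_unroll}, with $a_s=g_s$ and $b_k=(1-g_k)\Delta W_k$.
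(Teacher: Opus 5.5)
Your proof is correct and follows the paper's own argument: induction on $t$, with base case $(a_1,b_1)\circ(1,W_1)=(a_1,W_2)$ and inductive step $P_t=(a_t,b_t)\circ P_{t-1}=(a_t\alpha_{t-1},\,a_tW_t+b_t)=(\alpha_t,W_{t+1})$. The paper writes this regrouping without comment, so your explicit appeal to \cref{lem:assoc} for it, and your affine-map reading of $P_t$, are just added rigor.
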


\begin{proof}
We proceed by induction on $t$.

\emph{Base case ($t = 1$).} By direct computation,
\[
P_1 \;=\; (a_1, b_1) \circ (1, W_1) \;=\; \bigl(a_1 \cdot 1,\;\; a_1 W_1 + b_1\bigr) \;=\; (a_1,\; W_2).
\]
Hence $P_1 = (\alpha_1, W_2)$ with $\alpha_1 = a_1$, as claimed.

\emph{Inductive step.} Assume $P_t = (\alpha_t, W_{t+1})$ with $\alpha_t = \prod_{s=1}^{t} a_s$. Then
\begin{align*}
P_{t+1} &\;=\; (a_{t+1}, b_{t+1}) \circ P_t \\
        &\;=\; (a_{t+1}, b_{t+1}) \circ (\alpha_t, W_{t+1}) \\
        &\;=\; \bigl(a_{t+1}\alpha_t,\;\; a_{t+1} W_{t+1} + b_{t+1}\bigr) \\
        &\;=\; \bigl(\alpha_{t+1},\;\; W_{t+2}\bigr),
\end{align*}
where the last equality uses $\alpha_{t+1} = a_{t+1}\alpha_t$ and the recursion $W_{t+2} = a_{t+1} W_{t+1} + b_{t+1}$. Hence $P_{t+1}$ also has the claimed form, completing the induction.
\end{proof}

\paragraph{Worked example for $T = 3$.}
To make \cref{prop:prefix} concrete, we unroll the prefix product for $T = 3$. Composing left-to-right starting from $(1, W_1)$:
\begin{align*}
P_1 \;=\; (a_1, b_1) \circ (1, W_1)
&\;=\; \bigl(a_1,\;\; a_1 W_1 + b_1\bigr)
\;=\; (a_1,\, W_2),\\[2pt]
P_2 \;=\; (a_2, b_2) \circ P_1
&\;=\; (a_2, b_2) \circ (a_1, W_2) \\
&\;=\; \bigl(a_2 a_1,\;\; a_2 W_2 + b_2\bigr)
\;=\; (a_2 a_1,\, W_3),\\[2pt]
P_3 \;=\; (a_3, b_3) \circ P_2
&\;=\; (a_3, b_3) \circ (a_2 a_1, W_3) \\
&\;=\; \bigl(a_3 a_2 a_1,\;\; a_3 W_3 + b_3\bigr)
\;=\; (a_3 a_2 a_1,\, W_4).
\end{align*}
Substituting $a_t = g_t$ and $b_t = (1-g_t)\Delta W_t$ recovers, in the second component of $P_3$,
\[
W_4 \;=\; g_3 g_2 g_1\, W_1 + g_3 g_2 (1-g_1)\Delta W_1 + g_3 (1-g_2)\Delta W_2 + (1-g_3)\Delta W_3,
\]
which matches term-by-term the unrolled form~\cref{eq:gated_unroll} of the main text. This verifies the equivalence between the recursive and prefix-product views.

\paragraph{Tree-shaped reassociation.}
Crucially, by associativity (\cref{lem:assoc}), the same prefix product $P_3$ can be evaluated by any parenthesization of the operands, including the balanced tree
\[
P_3 \;=\; \bigl((a_3, b_3) \circ (a_2, b_2)\bigr) \circ \bigl((a_1, b_1) \circ (1, W_1)\bigr),
\]
in which the two inner compositions are independent and can be performed concurrently. This is the foundation of the parallel scan in \cref{app:scan_depth}.

\subsection{Parallel Evaluation via Associative Scan}
\label{app:scan_depth}

Any associative binary operator admits a work-efficient parallel prefix scan~\cite{blelloch1990prefix}. We briefly recall the standard up-sweep / down-sweep algorithm for completeness, then quantify its depth on the operator $\circ$.

\paragraph{Up-sweep.} Place the inputs $\{(a_k, b_k)\}_{k=1}^{T}$ at the leaves of a balanced binary tree of height $h := \lceil \log_2 T \rceil$. At each internal node, combine the two children under $\circ$. Because all combinations at a given level are independent, each level is performed in parallel on an unbounded-processor PRAM in $O(1)$ time, giving $O(\log T)$ time in total. After the up-sweep, each internal node $v$ holds the reduction of all leaves in its subtree, and the root holds the full reduction $P_T$.

\paragraph{Down-sweep.} A second pass propagates partial prefixes back down the tree. Initialize the root with the identity $(1, 0)$. At each internal node, the left child receives the value passed down from the parent, and the right child receives that value composed (via $\circ$) with the up-sweep value of the left child. After the down-sweep reaches the leaves, leaf $k$ holds the exclusive prefix $(a_{k-1}, b_{k-1}) \circ \cdots \circ (a_1, b_1)$; composing with the leaf's own value yields the inclusive prefix $P_k = (a_k, b_k) \circ \cdots \circ (a_1, b_1) \circ (1, W_1)$, whose second component is $W_{k+1}$ by \cref{prop:prefix}. The down-sweep also takes $O(\log T)$ time on unbounded processors.

\paragraph{Depth and work.} The total depth is $O(\log T)$ and the total work (number of $\circ$ operations) is $O(T)$, matching the sequential cost up to a constant factor. Each $\circ$ operation, by~\cref{eq:app_compose}, requires one scalar multiplication, one scalar-times-matrix scaling, and one matrix addition, i.e., $O(mn)$ scalar operations for $W \in \mathbb{R}^{m\times n}$. The parallel scan therefore performs $O(Tmn)$ scalar operations in total, identical (up to constants) to the $T$ scalar-matrix-add steps of the sequential recursion, but with $O(\log T)$ rather than $O(T)$ depth.

On a realistic machine with $p \ll T$ processors, the standard three-phase implementation of an associative scan~\cite{blelloch1990prefix} achieves
\begin{equation}
T_{\text{scan}}(T, p) \;=\; O\!\left(\frac{T}{p} + \log p\right),
\label{eq:app_finite_p}
\end{equation}
with total work $O(T)$. The three phases are:

\begin{enumerate}
\item \textbf{Local reduction.} Partition the $T$ inputs into $p$ contiguous blocks of size $\lceil T/p \rceil$. Each processor sequentially reduces its block under $\circ$ in $O(T/p)$ time, producing $p$ block reductions.
\item \textbf{Tree-based scan over block reductions.} Apply the up-sweep / down-sweep scan of \cref{app:scan_depth} to the $p$ block reductions, yielding the prefix of block reductions in $O(\log p)$ depth.
\item \textbf{Local propagation.} Each processor takes its received prefix and sequentially propagates it across its block, completing the per-leaf prefixes in $O(T/p)$ time.
\end{enumerate}

Summing the three phases gives~\cref{eq:app_finite_p}. When $p = \Theta(T)$, the $T/p$ term is $O(1)$ and~\cref{eq:app_finite_p} reduces to the unbounded-processor depth $O(\log T)$. When $p \ll T$, the $T/p$ term dominates and the parallel speedup over sequential evaluation is linear in $p$.

Martin and Cundy~\cite{martin2018parallelizing} apply this primitive to feature-space linear recurrences of the form $h_t = \lambda_t \odot h_{t-1} + x_t$ inside neural-network cells (SRU, QRNN, GILR-LSTM), demonstrating practical speedups of up to $9\times$ over serial linear-RNN evaluation on modern GPUs. Our analysis above establishes that the same primitive applies to the parameter-space trajectory $\{W_t\}$ of a gated fast-weight programmer, complementing 
prior work on parallel scans for hidden-state recurrences~\cite{martin2018parallelizing}. Crucially, in our setting the scan is performed once over the parameter sequence and produces the entire trajectory $\{W_t\}_{t=1}^{T}$ for a sequence of length $T$, in contrast to nonlinear recurrent architectures, which must serialize over $t$ in both forward and backward passes.

\subsection{Sequential Depth Lower Bound for Nonlinear Recurrences}
\label{app:nonlinear_contrast}

For a recurrence of the general form
\begin{equation}
h_{t+1} \;=\; f(h_t, x_t),
\label{eq:app_nonlinear}
\end{equation}
with $f$ nonlinear in $h_t$, no analogous reassociation is available. Specifically, the two-step composition
\[
h_{t+2} \;=\; f\bigl(f(h_t, x_t),\; x_{t+1}\bigr)
\]
is not, in general, expressible as a single application of an operator that depends only on $\{x_t, x_{t+1}\}$ and a fixed-size summary of the past. As a consequence, computing $h_T$ from $h_0$ requires $\Omega(T)$ sequential applications of $f$, regardless of available parallelism, and no associative-scan acceleration is possible~\cite{martin2018parallelizing}.

This contrast is precisely what differentiates the gated fast-weight framework from recurrent integrations of HQKAN such as QKAN-LSTM~\cite{hsu2025qkan}, in which an HQKAN block sits inside an LSTM cell and depends on the recurrent hidden state $h_{t-1}$. Such architectures inherit the $\Omega(T)$ sequential depth of LSTMs and require BPTT to traverse a chain of $T$ dense hidden-state Jacobians. The gated fast-weight framework decouples parameter-space evolution from any hidden-state loop: each $\Delta W_k$ is computed from $x_k$ alone, and the trajectory $\{W_t\}$ is resolved by an associative scan in $O(\log T)$ depth.

\paragraph{Summary.}
The gated fast-weight recursion admits a parallel evaluation strategy because its affine structure induces an associative composition law. This allows the parameter trajectory to be computed via a prefix scan in logarithmic depth, in contrast to the inherently sequential evaluation of general nonlinear recurrences. This parallel structure is a direct consequence of the scalar gating mechanism and will also play a central role in simplifying gradient composition, as we show next.

\section{Gradient Composition in the Gated Fast-Weight Recursion}
\label{app:gradient}

We now turn to the backward pass and show that the same affine structure underlying the parallel scan also governs gradient composition. In particular, we derive the sensitivity of $W_{t+1}$ to an earlier proposal $\Delta W_k$ and show that the resulting Jacobian reduces to a single scalar multiplier. This leads to a fundamental simplification: instead of propagating gradients through a chain of dense Jacobians as in general recurrent architectures, temporal dependencies are mediated entirely by products of scalar gates. As a result, gradient propagation along the time axis is both shallower and better conditioned.

\subsection{Unrolled form revisited}
\label{app:unroll_revisit}

Recursively expanding~\cref{eq:app_gated} gives the unrolled form (also stated as~\cref{eq:gated_unroll} in the main text):
\begin{equation}
W_{t+1}
\;=\; \left(\prod_{s=1}^{t} g_s\right) W_1
\;+\; \sum_{k=1}^{t} (1 - g_k)\!\left(\prod_{s=k+1}^{t} g_s\right)\!\Delta W_k,
\label{eq:app_unroll}
\end{equation}
with the convention $\prod_{s=k+1}^{t} g_s = 1$ when $k = t$. Define the scalar memory coefficients
\begin{equation}
\beta_{0,t} \;:=\; \prod_{s=1}^{t} g_s,
\qquad
\beta_{k,t} \;:=\; (1 - g_k)\!\prod_{s=k+1}^{t} g_s,
\quad k = 1, \dots, t.
\label{eq:app_betas}
\end{equation}
By induction one verifies $\beta_{0,t} + \sum_{k=1}^{t} \beta_{k,t} = 1$ and $\beta_{k,t} \in [0,1]$ for all $k$, so $W_{t+1}$ lies in the convex hull of $\{W_1, \Delta W_1, \dots, \Delta W_t\}$, recovering the geometric boundedness property of \cref{subsec:gated_theory}.

\subsection{Sensitivity to a single proposal}
\label{app:single_sensitivity}

We compute $\partial W_{t+1} / \partial \Delta W_k$ entrywise. Fix $k \in \{1, \dots, t\}$, and let $W_{t+1}^{(ij)}$ and $\Delta W_k^{(i'j')}$ denote arbitrary entries of $W_{t+1}$ and $\Delta W_k$, respectively. From~\cref{eq:app_unroll}, the entry $W_{t+1}^{(ij)}$ depends on $\Delta W_k$ only through the term $\beta_{k,t}\Delta W_k$, and within that term it depends on $\Delta W_k^{(i'j')}$ only when $(i',j') = (i,j)$, with derivative $\beta_{k,t}$. Hence
\begin{equation}
\frac{\partial W_{t+1}^{(ij)}}{\partial \Delta W_k^{(i'j')}}
\;=\; \beta_{k,t}\, \delta_{ii'}\delta_{jj'},
\label{eq:app_entry_jacobian}
\end{equation}
where $\delta$ is the Kronecker delta. Vectorizing both sides, the Jacobian is
\begin{equation}
\frac{\partial \mathrm{vec}(W_{t+1})}{\partial \mathrm{vec}(\Delta W_k)}
\;=\; \beta_{k,t}\, I_{mn},
\label{eq:app_scalar_jacobian}
\end{equation}
a scalar multiple of the $mn \times mn$ identity. The dense Jacobian collapses to a \emph{single scalar} multiplier $\beta_{k,t}$ propagated independently along each parameter coordinate.

\paragraph{Worked example: $t=3$, $k=1$.}
For $T = 3$ and $k = 1$, \cref{eq:app_unroll} reads
\[
W_4 \;=\; \beta_{0,3} W_1 + \beta_{1,3}\Delta W_1 + \beta_{2,3}\Delta W_2 + \beta_{3,3}\Delta W_3,
\]
with
\[
\beta_{0,3} = g_1 g_2 g_3,
\qquad
\beta_{1,3} = (1-g_1) g_2 g_3,
\qquad
\beta_{2,3} = (1-g_2) g_3,
\qquad
\beta_{3,3} = (1-g_3).
\]
Therefore
\[
\frac{\partial \mathrm{vec}(W_4)}{\partial \mathrm{vec}(\Delta W_1)}
\;=\; (1 - g_1) g_2 g_3 \, I_{mn},
\]
which is bounded by $1$ in absolute value because $g_s \in [0,1]$ for all $s$ and $1 - g_1 \in [0,1]$. One verifies directly that $\beta_{0,3} + \beta_{1,3} + \beta_{2,3} + \beta_{3,3} = 1$, recovering the convex-combination structure.

\subsection{Backpropagation through the fast parameters}
\label{app:bptt_factorization}

Let $\mathcal{L}$ denote a downstream loss whose dependence on $\Delta W_k$ flows through $W_{t+1}$ for various $t \ge k$ (typically through outputs $y_t = F(x_t; W_t)$ for $t > k$). The chain rule gives
\begin{equation}
\frac{\partial \mathcal{L}}{\partial \mathrm{vec}(\Delta W_k)}
\;=\; \sum_{t \ge k} \frac{\partial \mathcal{L}}{\partial \mathrm{vec}(W_{t+1})}\, \frac{\partial \mathrm{vec}(W_{t+1})}{\partial \mathrm{vec}(\Delta W_k)}
\;=\; \sum_{t \ge k} \beta_{k,t}\, \frac{\partial \mathcal{L}}{\partial \mathrm{vec}(W_{t+1})}.
\label{eq:app_chain_rule}
\end{equation}
The gradient flowing from time $t+1$ back to step $k$ is therefore the upstream gradient $\partial \mathcal{L}/\partial \mathrm{vec}(W_{t+1})$ \emph{scalar-rescaled} by $\beta_{k,t}$, with no matrix multiplication along the temporal direction.

Since the slow programmer produces $\Delta W_k$ from $x_k$ alone via an independent forward pass (cf.~\cref{eq:app_slow}), the gradient with respect to slow-programmer weights $\theta_S$ further factors as
\[
\frac{\partial \mathcal{L}}{\partial \theta_S}
\;=\; \sum_{k=1}^{T} \frac{\partial \mathcal{L}}{\partial \mathrm{vec}(\Delta W_k)}\, \frac{\partial \mathrm{vec}(\Delta W_k)}{\partial \theta_S},
\]
where each term $\partial \mathrm{vec}(\Delta W_k)/\partial \theta_S$ traverses only the depth of one slow-programmer forward pass, not a chain of $T$ recurrent steps. The gradient depth across time is therefore controlled entirely by the scalar product $\beta_{k,t}$, while the per-step gradient depth is the (constant) depth of one slow-programmer evaluation.

\subsection{Bounded, non-explosive gradient magnitudes}
\label{app:gradient_bounds}

The scalar coefficients $\beta_{k,t}$ are products of factors in $[0,1]$:
\begin{equation}
0 \;\le\; \beta_{k,t} \;=\; (1-g_k)\!\prod_{s=k+1}^{t} g_s \;\le\; 1.
\label{eq:app_beta_bound}
\end{equation}
Two consequences follow.

\paragraph{No explosion.} The norm of the temporal Jacobian is bounded above by $1$ for every $(k, t)$:
\[
\Bigl\|\,\partial \mathrm{vec}(W_{t+1})/\partial \mathrm{vec}(\Delta W_k)\,\Bigr\|_2
\;=\; \beta_{k,t} \;\le\; 1.
\]
Repeated composition of these factors across time can only contract gradients; it cannot amplify them. This rules out exploding gradients along the time axis by construction, without recourse to gradient clipping or spectral regularization.

\paragraph{Possible vanishing.} If $g_s$ is consistently small for $s \in \{k+1, \dots, t\}$, the product $\prod_{s=k+1}^{t} g_s$ shrinks geometrically and gradients to early $\Delta W_k$ may vanish. This is the standard adaptive-memory trade-off: gates near $1$ retain long-range gradient flow, while gates near $0$ implement aggressive forgetting. Crucially, the gates $g_s$ are produced by the slow programmer from $x_s$ alone, so their values are learned per input rather than fixed, and the model can in principle learn to retain long-range dependencies where the data warrant it.

\subsection{Comparison with dense recurrent Jacobians}
\label{app:rnn_comparison}

For a general nonlinear recurrence~\cref{eq:app_nonlinear}, the analogous sensitivity is a product of dense cell-Jacobians,
\begin{equation}
\frac{\partial h_{t+1}}{\partial h_k}
\;=\; \prod_{s=k+1}^{t} J_s,
\qquad
J_s \;:=\; \frac{\partial f(h_s, x_s)}{\partial h_s} \in \mathbb{R}^{d\times d},
\label{eq:app_dense_jacobian}
\end{equation}
where $d$ is the hidden-state dimension. The product of such matrices is well known to be the source of both exploding and vanishing gradients~\cite{martin2018parallelizing}: its spectral norm is bounded only by $\prod_s \|J_s\|_2$, which grows or shrinks geometrically with $t-k$ and depends on every entry of every intermediate Jacobian. Backpropagation through time therefore requires (i) storing all $T$ activations to recompute or query the $J_s$, and (ii) sequentially multiplying $T$ dense $d\times d$ matrices on the backward pass, costing $O(T d^3)$ work and $\Omega(T)$ depth.

The gated fast-weight framework replaces the dense product $\prod_s J_s$ with the scalar product $\beta_{k,t}$. The asymptotic comparison is summarized in \cref{tab:gradient_comparison}.

\begin{table}[!t]
\centering
\renewcommand{\arraystretch}{1.15}
\begin{tabular}{lcc}
\toprule
& Gated fast-weight (this work) & General nonlinear recurrence \\
\midrule
Temporal Jacobian payload & scalar in $[0,1]$ & dense $d \times d$ matrix \\
Norm bound across $t-k$ steps & $\le 1$ (always) & unbounded \\
Backward-pass depth across time & $O(\log T)$ via scan & $\Omega(T)$ \\
Backward-pass work along time & $O(T\,mn)$ & $O(T\,d^3)$ \\
Susceptibility to explosion & none & yes \\
Susceptibility to vanishing & yes (gate-modulated) & yes \\
\bottomrule
\end{tabular}
\caption{Comparison of temporal gradient composition between the gated fast-weight framework and a general nonlinear recurrence on hidden state $h_t \in \mathbb{R}^d$ (vs.\ fast parameters $W_t \in \mathbb{R}^{m\times n}$).}
\label{tab:gradient_comparison}
\end{table}

\paragraph{Summary.}
Gradient propagation through the gated fast-weight recursion reduces to the composition of scalar coefficients in $[0,1]$, rather than products of dense Jacobians. This yields two key advantages: (i) bounded, non-explosive gradient magnitudes by construction, and (ii) reduced effective depth of temporal gradient paths, which can be evaluated via parallel reductions. Compared to general nonlinear recurrences, this structure leads to both improved conditioning and lower computational complexity in the backward pass.

\section{Convergence Analysis on Time-Series Benchmarks}
\label{app:convergence}

This section provides a detailed, close-up view of the learning behavior of GQKAN-QKANFWP and the standard QFWP in time series benchmark task introduced in \cref{subsec:time_series}. While the main text reports aggregate metrics and qualitative comparisons up to 50 training epochs, here we extend the analysis to 100 epochs and visualize the full convergence trajectories at the most demanding window-size setting $N=64$. This extended view allows us to distinguish between optimization speed and representational limits, and to assess whether performance gaps persist or close with additional training. For each task we display the model predictions at four representative training epochs, with solid lines denoting the seed-averaged mean across five independent random seed initializations and shaded bands indicating the corresponding $\pm 1\sigma$ envelope. This visualization complements the aggregate metrics reported in Tables~\ref{tab:bessel}--\ref{tab:quantum} by exposing temporal qualitative behavior—amplitude tracking, phase alignment, convergence speed, and seed-to-seed stability—that scalar MSE values alone cannot fully capture. Across all six tasks, two recurring trends emerge. First, GQKAN-QKANFWP attains a near-perfect overlap with the ground truth substantially earlier in training than QFWP, and in the smooth-dynamics and quantum-dynamics tasks this overlap is already reached by epoch~15. Second, the GQKAN-QKANFWP $\pm 1\sigma$ envelope remains tight from early epochs onward and barely broadens in the test region, whereas the QFWP envelope is consistently wider and tends to expand past the train/test boundary, indicating both higher seed-to-seed variability and weaker out-of-sample generalization at long input windows. Crucially, the extended training to epoch~100 shows that these gaps do not close with additional training.

\paragraph{Damped SHM.}
\cref{fig:shm} illustrates the forecasting trajectories on the Damped SHM dataset. The target is a smooth, weakly-damped oscillation with a slowly-decaying amplitude envelope and a mildly amplitude-dependent period, jointly testing amplitude tracking, the retention of a slow envelope, and sensitivity to nonlinearity. The GQKAN-QKANFWP produces predictions that are visually indistinguishable from the ground truth from epoch~15 onward, with a $\pm 1\sigma$ band so narrow it remains within the line thickness of the mean curve across both the training and test regions. In contrast, the QFWP baseline systematically under-predicts the oscillation amplitude at every displayed epoch and fails to preserve the damping envelope; its mean prediction stabilizes as a low-amplitude oscillation whose phase progressively drifts relative to the ground truth, and its variance band visibly broadens in the test region. The contrast does not narrow as training progresses: even at epoch~100 the QFWP retains a clear amplitude deficit, indicating that this is a representational limit rather than an optimization gap. The qualitative behavior is consistent with the roughly three-orders-of-magnitude reduction in test MSE at epoch~50 achieved by GQKAN-QKANFWP at $N=64$ on this dataset as shown in \cref{subsec:time_series}.

\paragraph{Bessel function.}
\cref{fig:bessel} reports the learning trajectories on the second-order Bessel function of the first kind, $J_2(x)$, whose envelope decays as a power law ($\sim x^{-1/2}$) rather than exponentially and whose local period drifts mildly with $x$, in contrast to the strict periodicity of Damped SHM. At epoch~15 the GQKAN-QKANFWP already tracks both the period and the slowly-decaying amplitude envelope, with a tight variance band along the entire window. The QFWP captures the dominant frequency but consistently underestimates the early-time amplitude and exhibits a small but persistent phase offset that accumulates over later cycles. From epoch~30 onward GQKAN-QKANFWP refines its amplitude estimate so as to overlay the ground truth, whereas the QFWP mean curve plateaus at a reduced amplitude and continues to drift in phase, particularly past the train/test split. The seed-averaged shaded bands further reveal that the QFWP exhibits noticeably greater run-to-run variability throughout training, while the GQKAN-QKANFWP envelope remains essentially invisible at the displayed scale. These observations align with two orders of magnitude reduction in test MSE at epoch~50 reported quantitatively, and together suggest that a single fixed-depth additive update rule is insufficient to track multi-scale oscillatory structure at long input windows.

\paragraph{NARMA5.}
NARMA5 (\cref{fig:narma5}) is a nonlinear autoregressive sequence of order $n_0=5$ whose target contains sharp, irregular peaks driven by the order-$5$ autoregressive feedback in the recurrence. The qualitative behavior at $N=64$ is informative for two reasons. First, both models predict a near-constant trajectory at epoch~15, reflecting the difficulty of identifying the underlying nonlinear dependence solely from a 64-step input window. Second, the two models diverge sharply thereafter: the GQKAN-QKANFWP begins to recover the peak structure around epoch~30 and progressively sharpens its peaks through epoch~100, producing a mean curve that approximately tracks the ground-truth maxima and minima with a moderate but tightening variance band. The QFWP, by contrast, remains essentially flat across all four displayed epochs and exhibits a wide, weakly-informative variance band that barely contracts during training. This pattern is consistent with our broader observation in \cref{subsec:time_series} that QFWP undergoes substantial degradation at long input windows on the NARMA family, whereas the gated HQKAN-based variants maintain stable predictive behavior.

\paragraph{NARMA10.}
NARMA10 (\cref{fig:narma10}) doubles the autoregressive order to $n_0=10$ and therefore amplifies the difficulty of capturing the autoregressive structure. The qualitative behavior mirrors that of NARMA5 but with a more pronounced gap. By epoch~50 the GQKAN-QKANFWP resolves the larger peaks of the target, and by epoch~100 it tracks both the major and minor variations with a tight variance envelope. The QFWP captures only a smoothed approximation of the trend, missing most of the peak-to-valley structure, and its variance band remains broadest in the early epochs and only modestly tightens over training. The persistently high run-to-run variability of the QFWP at this longer-memory setting indicates that the additive update rule has difficulty stabilizing across seeds, whereas the seed-robustness of GQKAN-QKANFWP is consistent with the geometric boundedness property of the gated update derived in \cref{subsec:gated_theory} the fast parameters are constrained to the convex hull of the historical proposals, which prevents the unbounded additive accumulation that destabilizes QFWP at long $N$.

\paragraph{Delayed Quantum Control.}
The DQC task (\cref{fig:DQC}) consists of localized pulses with a decaying envelope, and therefore requires the model to retain temporal structure across multiple delay intervals. The GQKAN-QKANFWP reproduces both the pulse shape and the decaying amplitude envelope from epoch~15 onward, with a variance band so narrow it remains within the mean curve. The QFWP qualitatively tracks the dominant pulse structure in the training region but visibly degrades past the train/test split: the mean curve undershoots the pulse peaks, and the variance band broadens. Although both models capture the gross periodicity of the signal, the quantitative gap between them spans roughly two orders of magnitude in test MSE at epoch~50, and this gap is most evident in the unseen test region. This supports the claim that the gated update rule preserves long-range temporal structure that the additive QFWP loses at long input windows—precisely the regime in which non-Markovian feedback through the bound-state-in-the-continuum mechanism makes accurate forecasting most demanding.

\paragraph{Jaynes--Cummings dynamics.}
The JC dataset (\cref{fig:JC}) combines rapid cavity-qubit oscillations with dissipative photon-loss decay, producing the highest-frequency target in our benchmark suite. Already at epoch~15 the GQKAN-QKANFWP overlays the ground truth across the full sequence, capturing both the carrier oscillation and the slowly-decaying amplitude envelope, with a $\pm 1\sigma$ band so narrow it remains within the line thickness of the mean curve. Subsequent epochs (30, 50, 100) preserve this alignment with no visible drift, indicating that the model converges early and stably on this task. The QFWP, in contrast, exhibits a persistent amplitude deficit at every displayed epoch—its mean curve resolves the carrier frequency but underestimates its amplitude by roughly half, and the variance band visibly broadens past the train/test split, with extrapolation errors growing toward the end of the sequence. Although QFWP slowly recovers some amplitude through training, even at epoch~100 it fails to match the ground-truth envelope, confirming that this is a representational rather than an optimization gap. The visual gap between the two models is the most dramatic in our benchmark suite, mirroring the largest quantitative gap as well: the QFWP test MSE on this dataset at $N=64$ exceeds the corresponding GQKAN-QKANFWP value by roughly three orders of magnitude. We interpret this gap as a joint consequence of (i)~the spectral expressivity of the HQKAN-based fast programmer, which provides a rich Fourier basis well-suited to high-frequency dynamics, and (ii)~the gated update rule, which prevents the additive accumulation of irrelevant high-frequency parameter drift that the standard QFWP cannot suppress at long input windows.

\paragraph{Summary of qualitative trends.}
Across all six tasks at $N=64$, three consistent conclusions emerge. First, GQKAN-QKANFWP achieves an early-epoch alignment with the ground truth that QFWP never matches on smooth-dynamics and quantum-dynamics tasks and reaches only partially on the NARMA family. Second, the seed-to-seed variability of GQKAN-QKANFWP remains negligible throughout training, whereas QFWP exhibits persistently wide variance bands that often expand beyond the train/test boundary. Third, and most importantly, extending training to 100 epochs does not close the performance gap: the QFWP mean predictions stabilize at qualitatively incorrect amplitudes or frequencies, indicating a representational limitation rather than an optimization delay. These extended-horizon observations reinforce the quantitative results reported in \cref{subsec:time_series} and provide direct visual evidence that the gated update rule and HQKAN-based fast-weight programming framework enable stable, accurate, and seed-robust long-window forecasting in regimes where the standard QFWP fails to converge.

\begin{figure}[!t]
\centering
\includegraphics[width=\textwidth]{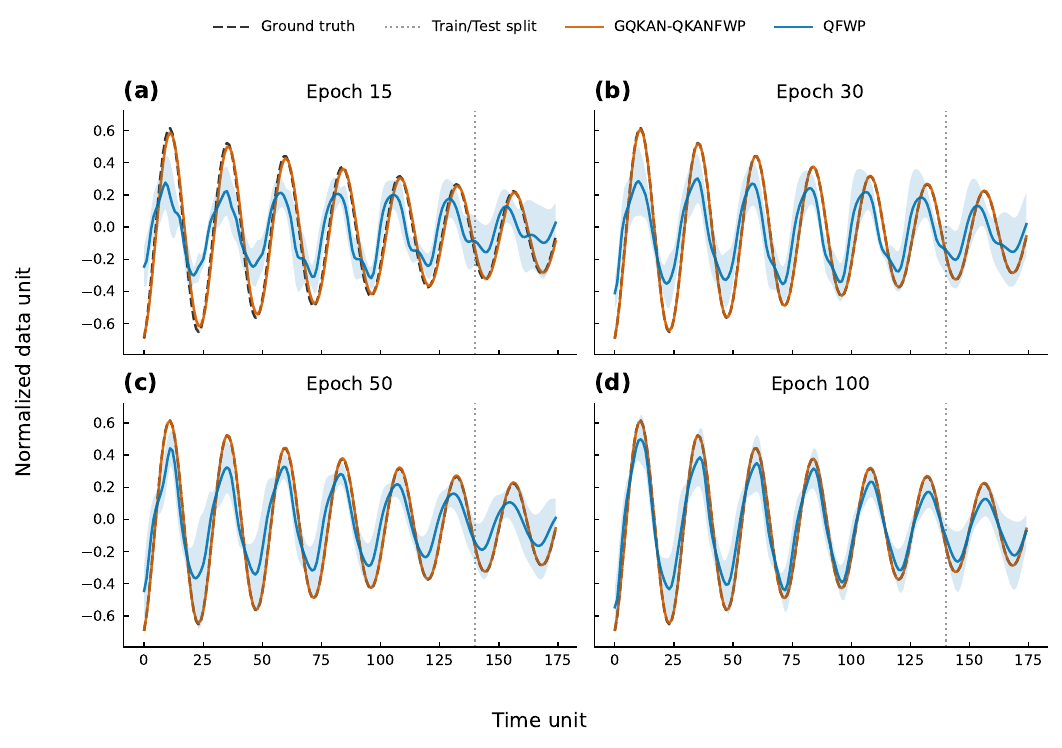}

\caption{\textbf{Forecasting performance on the Damped SHM dataset (Window-size N=64).} Panels \textbf{(a)}, \textbf{(b)}, \textbf{(c)}, and \textbf{(d)} illustrate the model predictions at training epochs 15, 30, 50, and 100, respectively. Solid lines denote the mean prediction across five independent random seed initializations for the proposed GQKAN-QKANFWP and the QFWP baseline. The shaded region represents the $\pm 1\sigma$ variance envelope for each model, demonstrating the model's stability across initializations. The ground truth dynamics are shown in dashed charcoal, and the vertical dotted line indicates the boundary between the training/validation phase and the unseen test set.}
\label{fig:shm}
\end{figure}

\begin{figure}[!t]
\centering
\includegraphics[width=\textwidth]{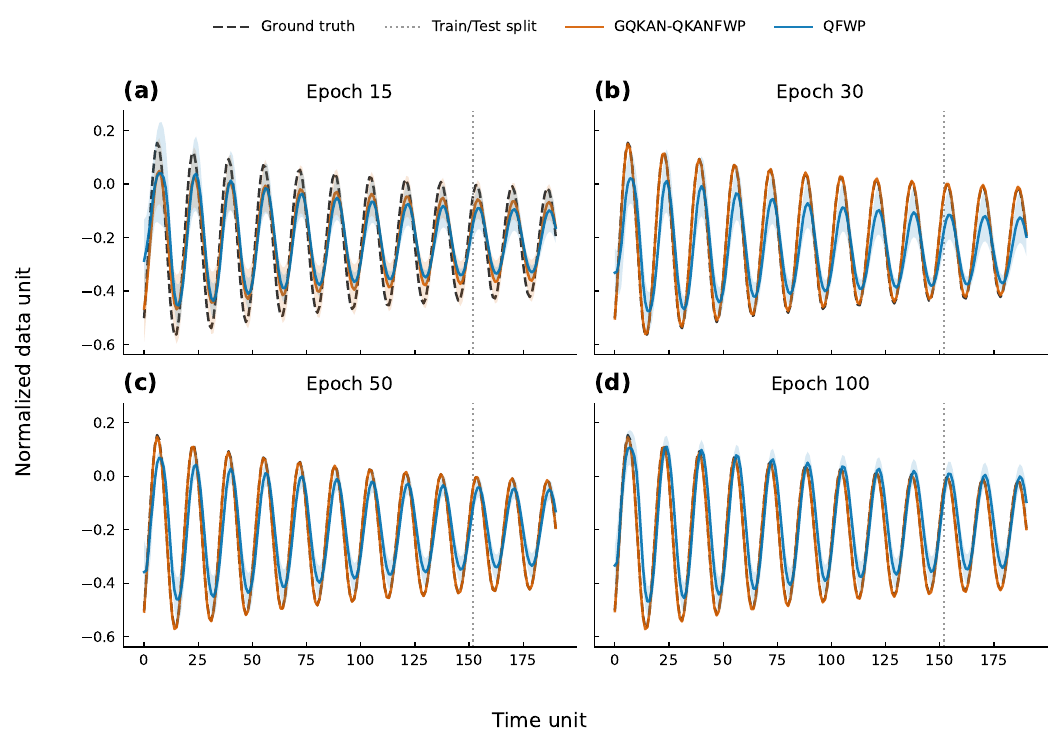}

\caption{\textbf{Forecasting performance on the Bessel function dataset (Window-size N=64).} Panels \textbf{(a)}, \textbf{(b)}, \textbf{(c)}, and \textbf{(d)} illustrate the model predictions at training epochs 15, 30, 50, and 100, respectively. Solid lines denote the mean prediction across five independent random seed initializations for the proposed GQKAN-QKANFWP and the QFWP baseline. The shaded region represents the $\pm 1\sigma$ variance envelope for each model, demonstrating the model's stability across initializations. The ground truth dynamics are shown in dashed charcoal, and the vertical dotted line indicates the boundary between the training/validation phase and the unseen test set.}
\label{fig:bessel}
\end{figure}

\begin{figure}[!t]
\centering
\includegraphics[width=\textwidth]{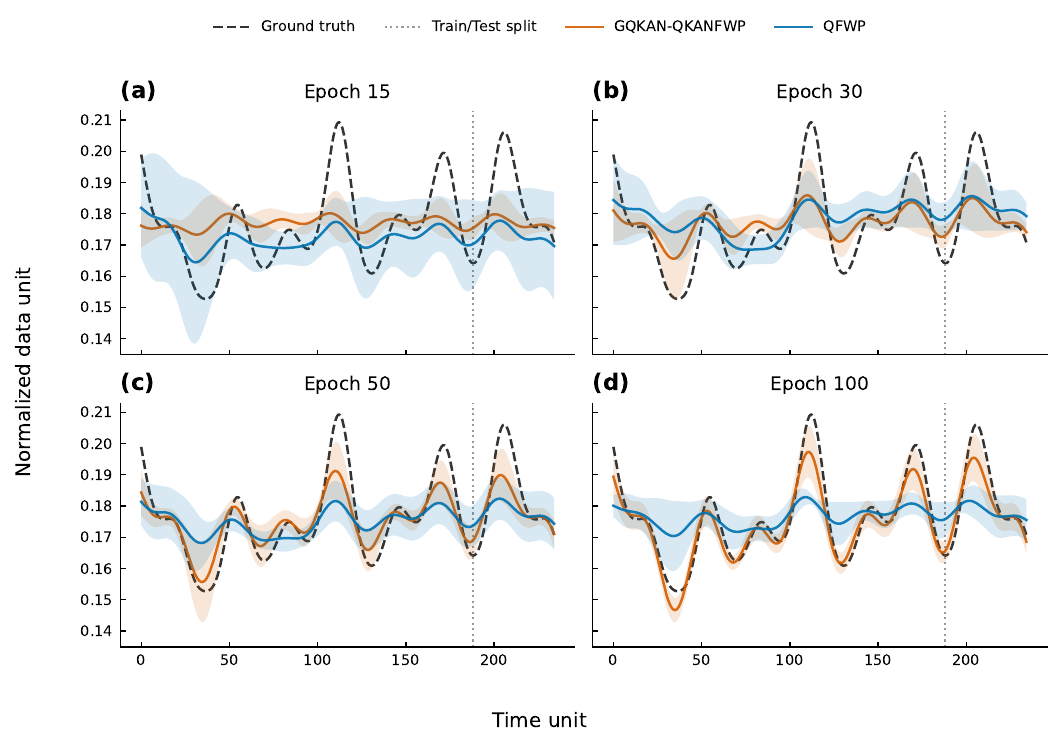}

\caption{\textbf{Forecasting performance on the NARMA5 dataset (Window-size N=64).} Panels \textbf{(a)}, \textbf{(b)}, \textbf{(c)}, and \textbf{(d)} illustrate the model predictions at training epochs 15, 30, 50, and 100, respectively. Solid lines denote the mean prediction across five independent random seed initializations for the proposed GQKAN-QKANFWP and the QFWP baseline. The shaded region represents the $\pm 1\sigma$ variance envelope for each model, demonstrating the model's stability across initializations. The ground truth dynamics are shown in dashed charcoal, and the vertical dotted line indicates the boundary between the training/validation phase and the unseen test set.}
\label{fig:narma5}
\end{figure}

\begin{figure}[!t]
\centering
\includegraphics[width=\textwidth]{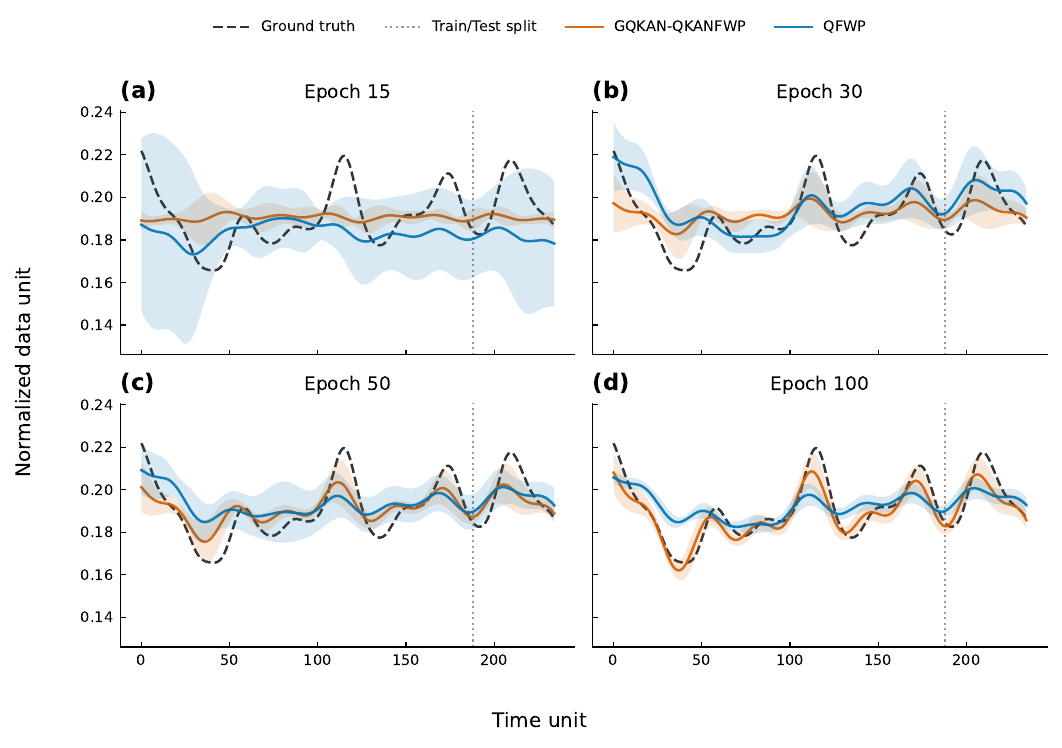}

\caption{\textbf{Forecasting performance on the NARMA10 dataset (Window-size N=64).} Panels \textbf{(a)}, \textbf{(b)}, \textbf{(c)}, and \textbf{(d)} illustrate the model predictions at training epochs 15, 30, 50, and 100, respectively. Solid lines denote the mean prediction across five independent random seed initializations for the proposed GQKAN-QKANFWP and the QFWP baseline. The shaded region represents the $\pm 1\sigma$ variance envelope for each model, demonstrating the model's stability across initializations. The ground truth dynamics are shown in dashed charcoal, and the vertical dotted line indicates the boundary between the training/validation phase and the unseen test set.}
\label{fig:narma10}
\end{figure}

\begin{figure}[!t]
\centering
\includegraphics[width=\textwidth]{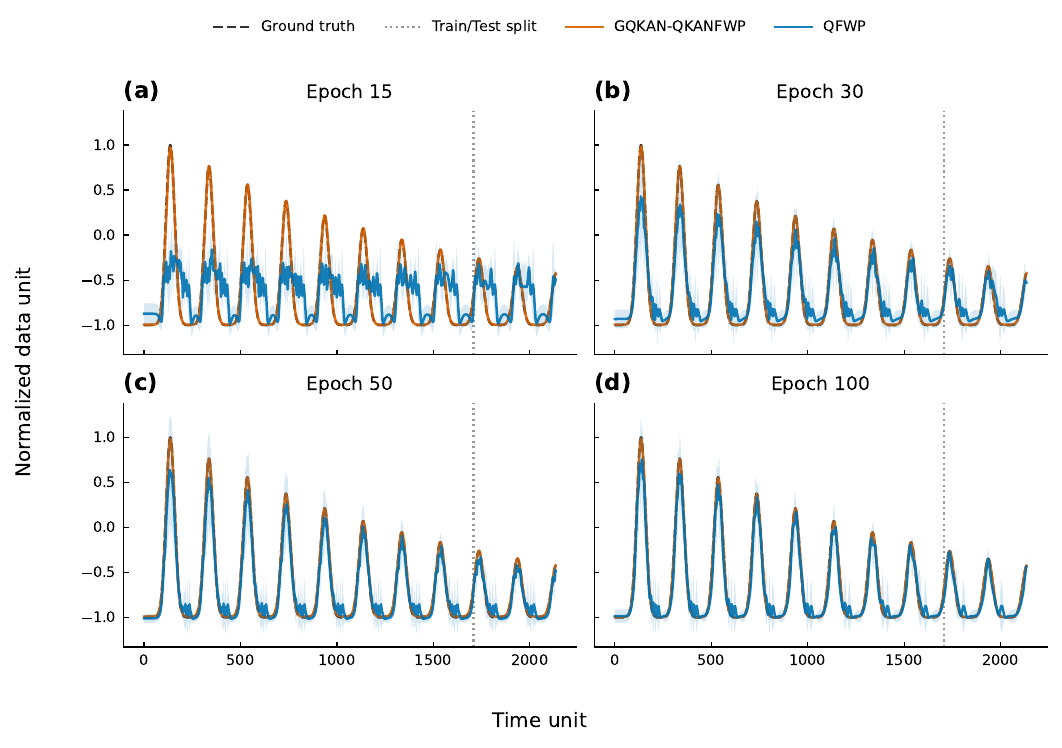}

\caption{\textbf{Forecasting performance on the Delayed Quantum Control dataset (Window-size N=64).} Panels \textbf{(a)}, \textbf{(b)}, \textbf{(c)}, and \textbf{(d)} illustrate the model predictions at training epochs 15, 30, 50, and 100, respectively. Solid lines denote the mean prediction across five independent random seed initializations for the proposed GQKAN-QKANFWP and the QFWP baseline. The shaded region represents the $\pm 1\sigma$ variance envelope for each model, demonstrating the model's stability across initializations. The ground truth dynamics are shown in dashed charcoal, and the vertical dotted line indicates the boundary between the training/validation phase and the unseen test set.}
\label{fig:DQC}
\end{figure}

\begin{figure}[!t]
\centering
\includegraphics[width=\textwidth]{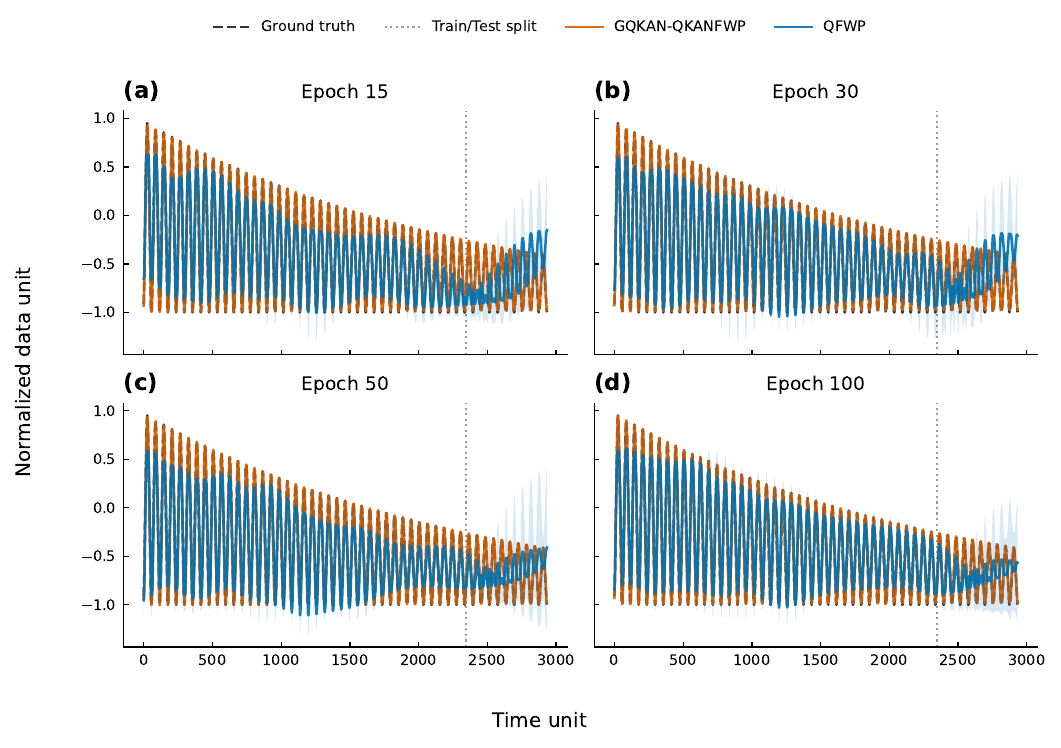}

\caption{\textbf{Forecasting performance on the Jaynes-Cummings dataset (Window-size N=64).} Panels \textbf{(a)}, \textbf{(b)}, \textbf{(c)}, and \textbf{(d)} illustrate the model predictions at training epochs 15, 30, 50, and 100, respectively. Solid lines denote the mean prediction across five independent random seed initializations for the proposed GQKAN-QKANFWP and the QFWP baseline. The shaded region represents the $\pm 1\sigma$ variance envelope for each model, demonstrating the model's stability across initializations. The ground truth dynamics are shown in dashed charcoal, and the vertical dotted line indicates the boundary between the training/validation phase and the unseen test set.}
\label{fig:JC}
\end{figure}

\end{document}